\documentclass[journal]{IEEEtran}
\usepackage{url} 
\usepackage[square,numbers,sort&compress]{natbib}
\usepackage{cite}
\usepackage{amsthm,amsmath,amssymb}
\usepackage{mathrsfs}
\usepackage{array}
\usepackage{dsfont}
\usepackage{tikz}
\usepackage[skins]{tcolorbox} %必须标注skin，才能使用shadow命令显示阴影。
\tcbuselibrary{breakable} %breakable：支持跨页
\usepackage{booktabs}  
\usepackage{multirow}
\usepackage{hyperref}

% cleverref
\usepackage[capitalize]{cleveref}
\crefname{section}{Sec.}{Secs.}
\Crefname{section}{Section}{Sections}
\Crefname{figure}{Figure}{Figures}
\Crefname{figure}{Fig.}{Figs.}
\Crefname{table}{Table}{Tables}
\crefname{table}{Tab.}{Tabs.}
\crefname{algorithm}{Algorithm}{Algorithms}
\crefname{algorithm}{Alg.}{Algs.}

% Theorem env
\newtheorem{definition}{Definition}[section]
\newtheorem{theorem}{Theorem}[section]

\newtheorem{remark}{Remark}[section]
\newtheorem{lemma}{Lemma}[section]

%%%%%%%%%%%% IEEE template %%%%%%%%%%%%%%%
% \ifCLASSOPTIONcompsoc
\usepackage[caption=false, font=normalsize, labelfont=sf, textfont=sf]{subfig}
% \else
% \usepackage[caption=false, font=footnotesize]{subfig}
% \fi

\usepackage{etoolbox}
\makeatletter
\patchcmd{\@makecaption}
  {\scshape}
  {}
  {}
  {}
\makeatletter
\patchcmd{\@makecaption}
  {\\}
  {.\ }
  {}
  {}
\makeatother

%%%%%%%%%%%%%%%%%%%%%%%%%%%%%
\usepackage{flushend} % 末端自动对齐
\usepackage{tabularx} % for 'tabularx' env. and 'X' col. type
\usepackage{ragged2e} % for \RaggedRight macro
% \usepackage{booktabs} % for \toprule, \midrule etc macros
%% create a derivative column type called 'L':
\newcolumntype{L}{>{\RaggedRight\hangafter=1\hangindent=0em}X}
% \numberwithin{equation}{section} % 公式编号随章节

\usepackage[ruled,linesnumbered]{algorithm2e}
\usepackage{courier}
%%%%%%%%%%%%%% abbreviation %%%%%%%%%%%%%%%%%%%
% Add a period to the end of an abbreviation unless there's one
% already, then \xspace.
\usepackage{xspace}
\makeatletter
\DeclareRobustCommand\onedot{\futurelet\@let@token\@onedot}
\def\@onedot{\ifx\@let@token.\else.\null\fi\xspace}

\def\eg{\emph{e.g}\onedot} 
\def\ie{\emph{i.e}\onedot} 
 
\def\etc{\emph{etc}\onedot} \def\vs{\emph{vs}\onedot}
 
\def\iid{i.i.d\onedot} 

\makeatother

%%%%%%%%%%%%%%%%%

%%%%%%%%%%%%%%%%%

% Songming’s commenting suite
% \newcounter{ToDo}
% \newcounter{gaocomm}
% \newcounter{wangcomm}
% \newcounter{Note}
\definecolor{blueryb}{rgb}{0.01, 0.28, 1.0}
\definecolor{aoen}{rgb}{0.0, 0.5, 0.0}
\definecolor{awesome}{rgb}{1.0, 0.13, 0.32}
\definecolor{americanrose}{rgb}{1.0, 0.01, 0.24}
 %\newline}}

% \newcommand{\note}[1]{\textcolor{awesome}{\stepcounter{Note}{\bf [NOTE \arabic{Note}}:\; #1] \newline}}
% \newcommand{\hcancel}[2][black]{\setbox0=\hbox{$#2$}%
% \rlap{\raisebox{.45\ht0}{\textcolor{#1}{\rule{\wd0}{1pt}}}}#2}
% \DeclareRobustCommand{\hsout}[1]{\texorpdfstring{\sout{#1}}{#1}}
% End of Songming’s commenting suite

% Junbin's commenting suite
\usepackage[normalem]{ulem}
\newcounter{ToDo}
\newcounter{gaocomm}

\newcounter{Note}
\definecolor{blue-violet}{rgb}{0.00,0.75,0.90}
\definecolor{mygreen}{rgb}{0.0, 0.5, 0.0}
\definecolor{awesome}{rgb}{1.0, 0.13, 0.32}
\definecolor{bostonuniversityred}{rgb}{0.8, 0.0, 0.0}
 %\newline}}

% End of Junbin's commenting suite

% correct bad hyphenation here
\hyphenation{op-tical net-works semi-conduc-tor}

%%%%%%%%%%%%%%%%%%%%%%%%%%%%%%%%%
\begin{document}
\title{Mixture Data for Training Cannot Ensure Out-of-distribution Generalization}

\author{Songming~Zhang,
        Yuxiao~Luo,
        Qizhou~Wang,
        Haoang~Chi,\\
        Xiaofeng~Chen,
        Bo~Han, 
        Junbin Gao,
        and~Jinyan~Li% <-this % stops a space
\thanks{\emph{Corresponding author: Jinyan~Li.}}
\thanks{The work was done at Shenzhen Institute of Advanced Technology, Chinese Academy of Sciences.
S. Zhang and X. Chen are with Department of Mathematics, Chongqing Jiaotong University, China (sm.zhang1@siat.ac.cn). Q. Wang and B. Han are with Hong Kong Baptist University. H. Chi is with National University of Defense Technology. J. Gao is with Discipline of Business Analytics, The University of Sydney Business School, The University of Sydney, Australia (junbin.gao@sydney.edu.au).  Y. Luo and J. Li are with Shenzhen Institute of Advanced Technology, Chinese Academy of Sciences (jinyan.li@siat.ac.cn)}
}

% The paper headers
\markboth{Journal of \LaTeX\ Class Files}%
{Zhang \MakeLowercase{\textit{et al.}}: Bare Demo of IEEEtran.cls for IEEE Journals}

% make the title area
\maketitle

\begin{abstract}
 Deep neural networks often face generalization problems to handle out-of-distribution (OOD) data, and there remains a notable theoretical gap between the contributing factors and their respective impacts. Literature evidence from in-distribution data has suggested that generalization error can shrink if the size of mixture data for training increases. However, when it comes to OOD samples, this conventional understanding does not hold anymore---Increasing the size of training data does not always lead to a reduction in the test generalization error. In fact, diverse trends of the errors have been found across various shifting scenarios including those decreasing trends under a power-law pattern, initial declines followed by increases, or continuous stable patterns. Previous work has approached OOD data qualitatively, treating them merely as samples unseen during training, which are hard to explain the complicated non-monotonic trends. In this work, we quantitatively redefine OOD data as those situated outside the convex hull of mixed training data and establish novel generalization error bounds to comprehend the counterintuitive observations better. The new error bound provides a tighter upper bound for data residing within the convex hull compared to previous studies and is relatively relaxed for OOD data due to consideration of additional distributional differences. Our proof of the new risk bound agrees that the efficacy of well-trained models can be guaranteed for unseen data within the convex hull; More interestingly, but for OOD data beyond this coverage, the generalization cannot be ensured, which aligns with our observations. Furthermore, we attempted various OOD techniques (including data augmentation, pre-training, algorithm power, \etc) to underscore that our results not only explain insightful observations in recent OOD generalization work, such as the significance of diverse data and the sensitivity to unseen shifts of existing algorithms, but it also inspires a novel and effective data selection strategy.
  
\end{abstract}

% Note that keywords are not normally used for peerreview papers.
\begin{IEEEkeywords}
Out-of-distribution, Deep neural network, Generalization risk.
\end{IEEEkeywords}

% For peer review papers, you can put extra information on the cover
% page as needed:
% \ifCLASSOPTIONpeerreview
% \begin{center} \bfseries EDICS Category: 3-BBND \end{center}
% \fi
%
% For peerreview papers, this IEEEtran command inserts a page break and
% creates the second title. It will be ignored for other modes.
% \IEEEpeerreviewmaketitle

\section{Introduction}

\begin{figure*}[!tb]
    % \vskip 0.2in
    \centering
    \includegraphics[width=.75\linewidth]{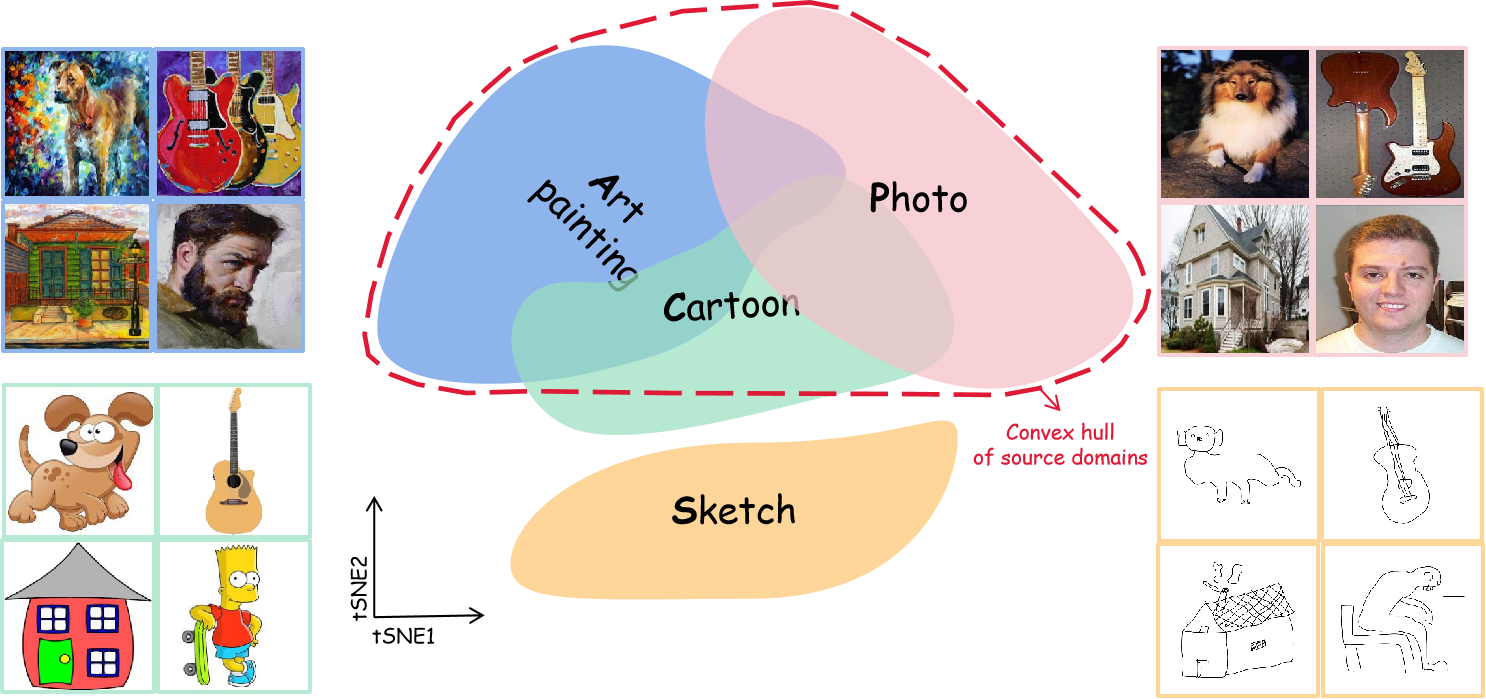}
    \caption{A schematic diagram of a multi-domain sample in practice which consists of source and target domains. Suppose we can only have access to \texttt{Painting} and \texttt{Photo}, the model exhibits different generalization abilities at different OOD domains \texttt{Cartoon} and \texttt{Sketch} according to the distance to the mixture of training data. We draw a counterintuitive conclusion that the efficacy of well-trained models cannot be guaranteed for OOD data beyond the convex hull of training mixture, which is consistent with our experimental observations in \Cref{sec:obser}.}
     \label{fig:eg}
     % \vskip -0.2in
\end{figure*}

% background
\IEEEPARstart{R}{eal-world} data are often sourced from diverse domains, where each source is characterized by a different distribution shift, and unknown shifts are hidden in their test distributions (see \Cref{fig:eg})~\citep{zhang2021deep, yang2023change}.
While deep neural networks (DNNs) demonstrate proficiency with in-domain data, they have difficulties in gaining generalization on unknown data shifts.
Most out-of-distribution (OOD) generalization methods typically assume their model's capability to extrapolate across every unseen shift and have been working on algorithm improvements such as regularization~\citep{chen2022pareto}, robust optimization~\citep{azizian2024exact}, and adjustments in model architecture~\citep{li2023sparse}.
Despite these efforts, theoretical analysis of the key factors influencing unseen data and their impacts on model performance is still lacking.
Previous empirical evidence such as neural scaling law~\citep{kaplan2020scaling, Cherti2023reproduce} suggests that all generalization errors follow the same decreasing trend as a power of training set size. Thus, it means that the addition of training data can effectively minimize generalization errors even for unknown distributions.

% theory
However, it appears that the scaling law conclusion may not hold true in practice, potentially leading to counterintuitive outcomes.
In fact, the OOD generalization error can have a diverse range of scenarios, including decreases under a power-law pattern, initial decreases followed by increases, or remaining stable.
We thus propose that \emph{not all generalization errors in unseen target environments will decrease when training data size increases.}
As the generalization error decreases, the model's accuracy improves, indicating better generalization capability on OOD data.
In other words, we conjecture that simply increasing the volume of training data may not necessarily enhance generalization to OOD data.
To our knowledge, no theoretical or empirical methods in the literature have addressed this phenomenon.

To formally understand such problems, this paper first presents empirical evidence for non-decreasing error trends under various experimental settings on the MNIST, CIFAR-10, PACS, and DomainNet datasets.
The results are then used to illustrate that with the expansion of the training size, the generalization error decreases when the test shift is relatively minor,  mirroring the performance observed under in-distribution (ID) data.
Yet, when the degree of shift becomes substantial, the generalization error may not decrease monotonically.
Previous works often indiscriminately categorized data unseen during training as OOD data without acknowledging the underlying causes of the non-monotonic patterns that may contain.
This motivates us to revisit fundamental OOD generalization set-ups to elucidate such non-decreasing trends.

We propose a novel theoretical framework within the context of OOD generalization.
Given a set of training environments, we first argue that OOD data can be redefined as a type of data lying outside the convex hull of source domains.
Based on this new definition, we prove new bounds for OOD generalization errors.
Our revised generalization bound establishes a tighter upper bound for data within the convex hull, while being relatively relaxed for OOD data when considering additional distributional divergences.
Further analysis of this new bound yields some theoretical insights into the influence of shift degree and data size on generalization error, and even on model capacity.
By effectively distinguishing between ID and OOD cases in the unseen data, we demonstrate that model performance can be guaranteed for unseen yet ID data with extrapolation across training environments.
However, our results collectively highlight that \textbf{being trivially trained on data mixtures cannot guarantee the OOD generalization ability of the models}, \ie, the model cannot infinitely improve its OOD generalization ability by increasing training data size.
Thus, achieving comparable performance on OOD data remains a formidable challenge for the model.

% method
With our new theoretical framework in place, the ensuing question pertains to enhancing its predictive efficacy in scenarios devoid of prior target knowledge and providing coherent explanations. First, we employ widely used techniques to assess model adaptability, including data augmentation, pre-training, and algorithms. Drawing from our new theoretical findings, the efficacy of these invaluable tools can be elucidated by their ability to expand the coverage of the training mixture and its associated convex hull. Pre-training enables the model to acquire broader and more generalized representations from pre-trained datasets, while data augmentation facilitates an increase in the diversity of representations by expanding the size of training data. In contrast, hyperparameter optimization yields poor results since it solely modifies the training hyperparameters without providing more insight regarding the OOD samples. Moreover, it is important to note that existing algorithms are also sensitive to unseen test shifts.

We also note that this work distinguishes itself not just by offering theoretical understandings for error scenarios on DNNs and common techniques used for OOD generalization, but also by presenting novel insights for algorithm design. Specifically, inspired by the analysis of data diversity in our new definition, we proceed to evaluate a novel data selection algorithm that relies only on training data.
By selectively choosing samples with substantial differences alone, the coverage of the training mixture can be effectively expanded, consequently broadening representations learned by the models.
Remarkably, this algorithm surpasses the baseline, particularly in the case of large training size, whether chosen randomly or using reinforcement learning techniques.
As our focus is on data preprocessing without the requirement of environmental labels, this allows for a smooth combination with other OOD generalization methods to improve further the model's capability to handle unseen shifts.

% contribution
Overall, we have made three significant contributions in this study:
\begin{enumerate}
    \item Contrary to the widely held ``more data, better performance'' paradigm, we draw a counterintuitive picture: simply increasing training data cannot ensure model performance especially when distribution shifts occur in test data. Our empirical conclusion stems from the complicated non-decreasing trends of OOD generalization errors.
    \item We proposed a novel definition for OOD data and proved new error bounds for OOD generalization. Our further analysis of this new bound revealed the main factors with their influence leading to non-decreasing tendency and provided strong support for our empirical conclusions.
    \item We explored and validated popular techniques such as data augmentation, pre-training, and algorithm tricks, demonstrating that our new theoretical results not only explain their effectiveness but also guide a novel data selection method for superior performance.
\end{enumerate} 
\section{Generalization error of patterns observed from OOD scenarios}
\label{sec:obser}
\begin{table*}[!tb]
    \caption{Summary of network architectures used in the experiments.\\ We select different architectures for different tasks to better observe OOD generalization error patterns.}
    \centering
    \begin{sc}
    % \resizebox{.95\columnwidth}{!}{
    \begin{tabular}{llrrr}
    \toprule[1pt]
     \textbf{Experiment}&  \textbf{Network(s)}&  \textbf{\#classes}&  \textbf{Image Size}& \textbf{Batch Size} \\ \midrule
     Rotated CIFAR-10&  WRN-10-2&  2&  $(3,32,32)$& 128 \\
     Blurred CIFAR-10&  WRN-10-2&  2&  $(3,32,32)$& 128 \\
     Split-CIFAR10&  SmallConv, WRN-10-2&  2&  $(3,32,32)$& 128 \\
     Rotated MNIST&  SmallConv&  10&  $(1,28,28)$& 128 \\
     PACS&  WRN-16-4&  4&  $(3,64,64)$& 16 \\
     DomainNet&  WRN-16-4&  40&  $(3,64,64)$& 16 \\
     CINIC-10&  WRN-10-2&  10&  $(3,32,32)$& 128 \\
    \bottomrule[1pt]
    \end{tabular}
    % }
    \label{tab:net}
    \end{sc}
\end{table*}

To thoroughly assess the model's generalization ability, especially its performance in the presence of distribution shifts, we examine generalization patterns from different scenarios of OOD data. OOD generalization refers to the model's capacity to perform well on those test data containing a distribution distinct from those occurring in the training data~\citep{zhou2022domain}. Traditionally, it has been intuitively assumed that as the training size increases, the model's generalization performance improves and the generalization error decreases accordingly, which was also verified experimentally by neural scaling law~\citep{kaplan2020scaling}. However, this assumption needs to be verified by specific experiments to ensure whether it is still applicable to OOD data.

\subsection{Experimental Settings for OOD Evaluation}

To systematically evaluate the model's ability to generalize on OOD samples, we designed a series of experiments. Our objective through these experiments is to reveal the relationship between model's OOD capability and distribution shift and to explore whether and why the model can maintain its performance under unknown shifts.  The summary of the datasets, network architectures, and training details used in the experiments are presented as follows.

\subsubsection{Datasets containing OOD distributions}
The OOD sub-tasks follow the setting  outlined by~\citet{de2023value} and are constructed from the images available at CIFAR-10, CINIC-10, and several datasets at DomainBed~\citep{gul2020search} such as Rotated MINST~\citep{ghifary2015domain}, PACS~\citep{li2017deeper} and DomainNet~\citep{peng2019moment}. Two types of settings are attempted to examine the impact of training data size on OOD data.

\textbf{OOD data arising due to correlation shift.}
We investigate how correlation shift affects a classification task from a transformed version of the same distribution. We consider the following scenarios for this purpose.
\begin{enumerate}
    \item \textbf{Rotated CIFAR-10:}
    $0^\circ$ and $60^\circ$-rotated images as training distribution, and $\theta_1^\circ$-rotated images ($30^\circ - 150^\circ$) as unseen target.  This scenario tests how rotation changes the appearance of natural images.
    \item \textbf{Blurred CIFAR-10:}
    $0$ and $3$-blurred images as training distribution and $\sigma_1$-blurred images with a range of blurring levels from $2$ to $20$ as the unseen target.
    This scenario tests how blur changes the clarity of natural images. Here, ``blur'' refers to adding a corresponding degree of Gaussian blur to the original images.
    \item \textbf{Rotated MNIST:}
    $0^\circ$ and $30^\circ$-rotated digits as training distribution, and $\theta_1^\circ$-rotated digits ($15^\circ - 60^\circ$) as unseen target. This scenario tests how rotation changes the appearance of handwritten digits.
\end{enumerate}

% sec 2-fig 1
\begin{figure*}[!tb]
    \centering
    \includegraphics[width=\linewidth]{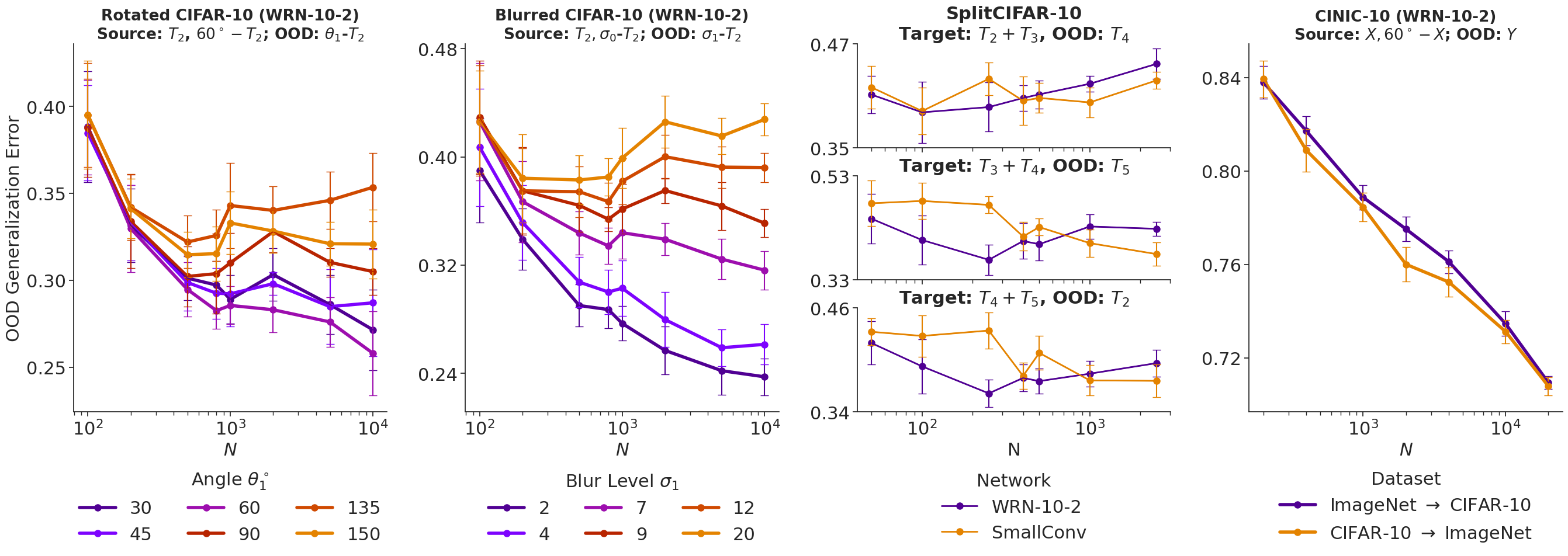}
    \caption{
    The lower the OOD generalization error, the better the model is at handling unseen targets. Error bars indicate $95\%$ confidence intervals (10 runs).
    \textbf{(a)} Different angles $\theta_1$ as unseen samples obtained by rotating images in OOD sub-task T2 (\texttt{Bird} \emph{vs.} \texttt{Cat}) in CIFAR-10, with  $0^{\circ}$ and $60^{\circ}$ as training samples, $M=400$.  For small $\theta_1$, increasing training data size improves the OOD generalization ability of the model. However, beyond a certain value of $\Delta_1$, the error with large rotation has a non-monotonic trend, which means overfitting on unseen rotation. 
    \textbf{(b)} $2-20$ level of Gaussian blur are unseen samples, and the training blur levels are at 0 and 3, $M=400$. The model is resilient to unobserved blur, yet for extreme levels of blur, non-monotonic scenarios are evident, indicating that the model is misaligned with data due to noise. 
    \textbf{(c)} Generalization error of two separate networks, WRN-10-2 and SmallConv, concerning a given unseen task. Our plots involve 3 different task pairs from Split-CIFAR10 and exhibit the generalization error as a function of the number of training samples. All 3 pairs demonstrated a non-decreasing trend in OOD generalization for both network models.
    \textbf{(d)} Generalization error of two separate datasets in CINIC-10, consisting of CIFAR-10 and ImageNet subsuet. We set one as the training environment and the other as OOD. While the purple curve shows higher error due to distribution shift, we did not observe any non-monotonic trend when testing on the unseen samples. Even when transferring between different datasets, the degree of distribution shift is still the main factor.
    }
    \label{fig:syn_1}
\end{figure*}

\textbf{OOD data caused by diversity shift.}
We also study how diversity drifts affect a classification task using data samples from source distributions and OOD samples from a different distribution. Diversity shift is a change in diversity or variability of the data distribution between training data and OOD data~\citep{ye2022ood}.
We consider the following scenarios for this purpose.
\begin{enumerate}
    \item \textbf{CINIC-10:}
    The construction of the dataset motivates us to consider two sub-tasks from CINIC-10: (1) Distribution from CIFAR images to ImageNet, and (2) Distribution from ImageNet images to CIFAR. This scenario tests how well a model can recognize images from another data distribution.
    \item \textbf{Split-CIFAR10:}
    We use two of the five binary classification tasks from Split-CIFAR10 as training data and another as the unseen task. This scenario tests how well a model can distinguish between different categories of natural images.
    \item \textbf{PACS:}
    We use two of the four-way-classification from four domains (Photo, Art Painting, Cartoon and Sketch) and images from one of the other unused domains as the unseen task.  This scenario tests how well a model can generalize to different styles and depictions of images.
    \item \textbf{DomainNet:}
    Same settings as PACS, we use a 40-way classification from 6 domains in DomainNet (\texttt{clipart}, \texttt{infograph}, \texttt{painting}, \texttt{quickdraw}, \texttt{real}, and \texttt{sketch}).
    This scenario tests how well a model can adapt to different domains of images with varying levels of complexity and diversity
\end{enumerate}

\subsubsection{Details for Training}
\label{sec:app_net}

For each random seed, we randomly select samples of different sizes $n_0$ and $n_1$ from the source distribution, making their total number as $N=n_0+n_1$. Next, we select OOD samples of a fixed size $M$ from the unseen target distribution. For Rotated MNIST, Rotated CIFAR-10, and Blurred CIFAR-10, the unseen distribution refers to never appearing at the rotated or blurred level. For PACS and DomainNet, images are down-sampled to $(3, 64, 64)$ during the training.

\subsubsection{Neural Architectures}
In our experiments, we utilize three different network architectures:
(a) a small convolutional network with $0.12$M parameters (\emph{SmallConv}),
(b) a wide residual network of depth $10$ and a widening factor $2$ (\emph{WRN-10-2}), and
(c) a larger wide residual network of depth $16$ and widening factor $4$ (\emph{WRN-16-4})~\citep{zagoruyko2016wide}.
SmallConv consists of $3$ convolution layers with a kernel size of $3$ and $80$ filters, interweaved with max-pooling, ReLU, batch-norm layers, and a fully-connected classifier layer.

\Cref{tab:net} provides a summary of the network architectures used.
All networks are trained using stochastic gradient descent with Nesterov's momentum and cosine-annealed learning rate scheduler.
The training hyperparameters include a learning rate of $0.01$ and a weight-decay of $10^{-5}$.

\subsection{Generalization error scenarios for deep learning benchmark datasets}
\label{sec:obser_real}

\textbf{Not all generalization errors decrease due to correlation shift.}
The shift in correlation refers to the change in the statistical correlations between the source and unseen target distribution~\citep{ye2022ood}.
Five binary classification sub-tasks use CIFAR-10 to explore the generalization scenario of unseen data. Our research focuses on a CIFAR-10 sub-task $T_2$ (\texttt{Bird} \vs \texttt{Cat}), where we use rotated images with $0^{\circ}$ and $60^{\circ}$ as training environment. Also, we use the rotated images with fixed angles from $30^{\circ} - 150^{\circ}$ as OOD samples. We also investigate the OOD effect of applying Gaussian blur with different levels to sub-task $T_2$ from the same distribution. Our results in \Cref{fig:syn_1}(a)-(b) both show a monotonically decreasing trend within the generalization error for the low-level shift, \ie, small rotation, and low blur.  However, for a high-level shift, it is a non-monotonical function of training sample size on the target domain. Despite enlarged training data, the generalization error remains relatively high. This can be explained in terms of overfitting training data: the model learns specific patterns and noise of the training data (such as rotation or blur) but fails to capture the underlying representations of the data. This compromises the model’s robustness to variations in distribution and its generalization to new data.

\textbf{Not always a decreasing trend can occur when OOD samples are drawn from a different distribution.}
OOD data may arise due to categories evolving with new appearances over time or drifting in underlying concepts (for instance, an airplane in 2024 with a new shape or even images of drones)~\citep{zenke2017continual}. Split CIFAR-10 with 5 binary sub-tasks is used to study the generalization scenario of unseen data, such as \texttt{frog} \vs \texttt{horse} and \texttt{ship} \vs \texttt{truck}. We consider sub-task combinations $(T_i, T_j)$ as the training domain and evaluate the trend of error on $T_k$. As the sample size grows, see  \Cref{fig:syn_1}c, the generalization error shrinks slightly or even does not show a falling trend, either in WRN-10-2 or SmallConv. Interestingly, WRN-10-2 initially outperforms SmallConv but it is overtaken by the latter as the number of samples increases.

\begin{figure*}[tb]
    \centering
    \includegraphics[width=\linewidth]{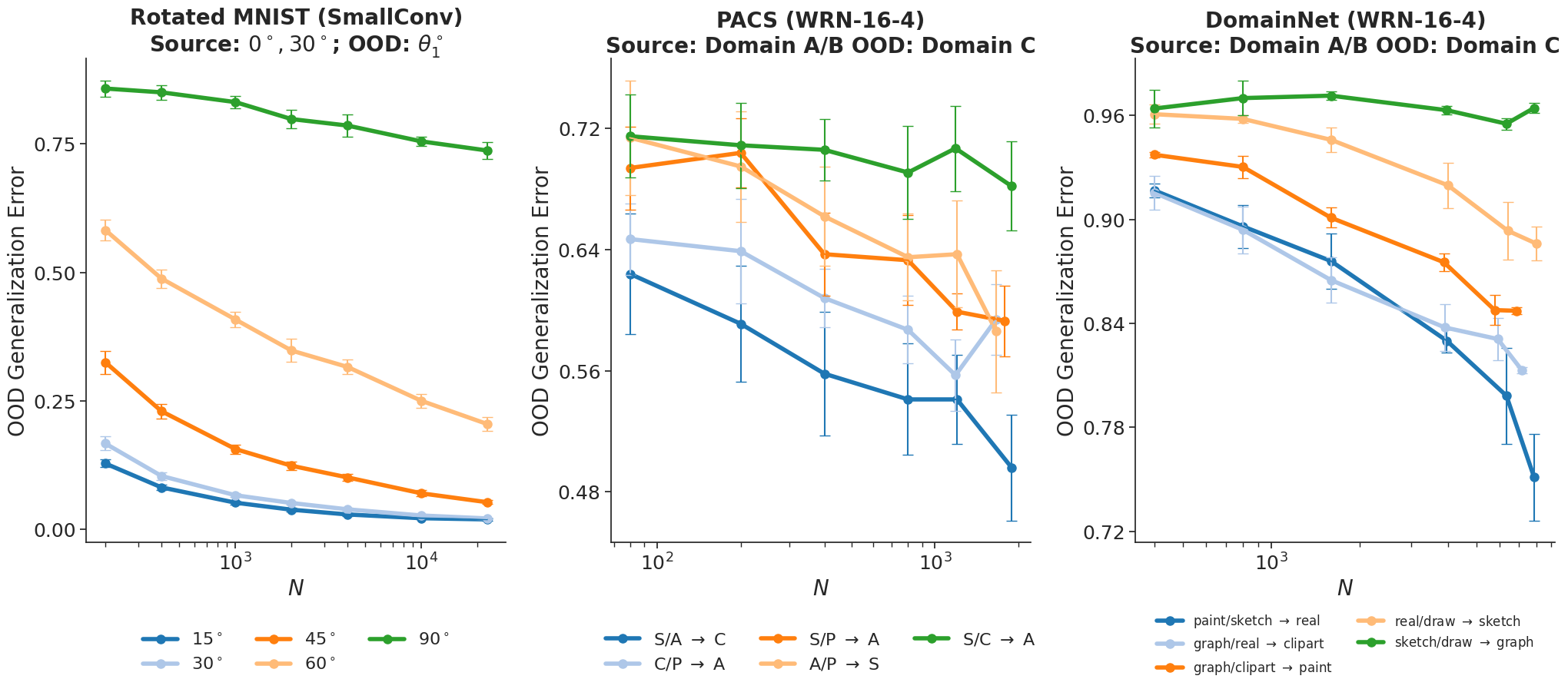}
    \caption{Different error trends in OOD generalization error on three DomainBed benchmarks.
    \textbf{Left:} Rotated MNIST (10 classes, $M=2,000$, SmallConv),
    \textbf{Middle:} PACS (4 classes, 4 domains \{A, C, P, S\}, $M=25$, WRN-16-4),
   \textbf{Right:} DomainNet (40 classes, 6 domains \{paint, sketch, real, graph, clipart, draw\}, $M=25$, WRN-16-4).
   Error bars indicate $95\%$ confidence intervals (10 runs for Rotated MNIST and PACS, 3 runs for DomainNet).
   As the number of training samples increases, the various distances between distributions and how they are combined lead to different decreasing trends in OOD generalization error.
    }
    \label{fig:syn_2}
\end{figure*}

\textbf{Non-decreasing trend also occurs for OOD benchmark datasets.}
The different trends of generalization error motivate us to further investigate three popular datasets in the OOD works~\citep{ye2022ood, arjovsky2019invariant}. First, we examine Rotated MNIST benchmark when the OOD samples are represented by $\theta$-rotated digit images in MNIST, while $0^\circ$ and $30^\circ $as training angles.
We observe a decreasing trend in \Cref{fig:syn_2} (left), but the error lower bound keeps rising and the slope keeps getting smaller as the testing angle increases. This shows that even in real-world datasets, model generalization is also vulnerable to unknown shifts.
Thus, we explore the PACS~\citep{li2017deeper} and DomainNet~\citep{peng2019moment} dataset from DomainBed benchmark~\citep{gul2020search}. The dataset contains subsets of different domains, two of which we selected as training domains and the other as an unseen target domain. When training samples consist of sketched and painted images,
the generalization error on the clipart domain falls exponentially (\ie, S/A $\rightarrow$ C in~\Cref{fig:syn_2} (Middle)).
Moreover, an interesting observation is that the generalization error tested on the graphical images drops only slightly and remains consistently high when learned from parts of drawings and sketches. Similar trends are also observed in DomainNet, which is a comparable benchmark to PACS;  See~\Cref{fig:syn_2} (Right).

\begin{figure}[!tb]
    \centering
    \includegraphics[width=.95\linewidth]{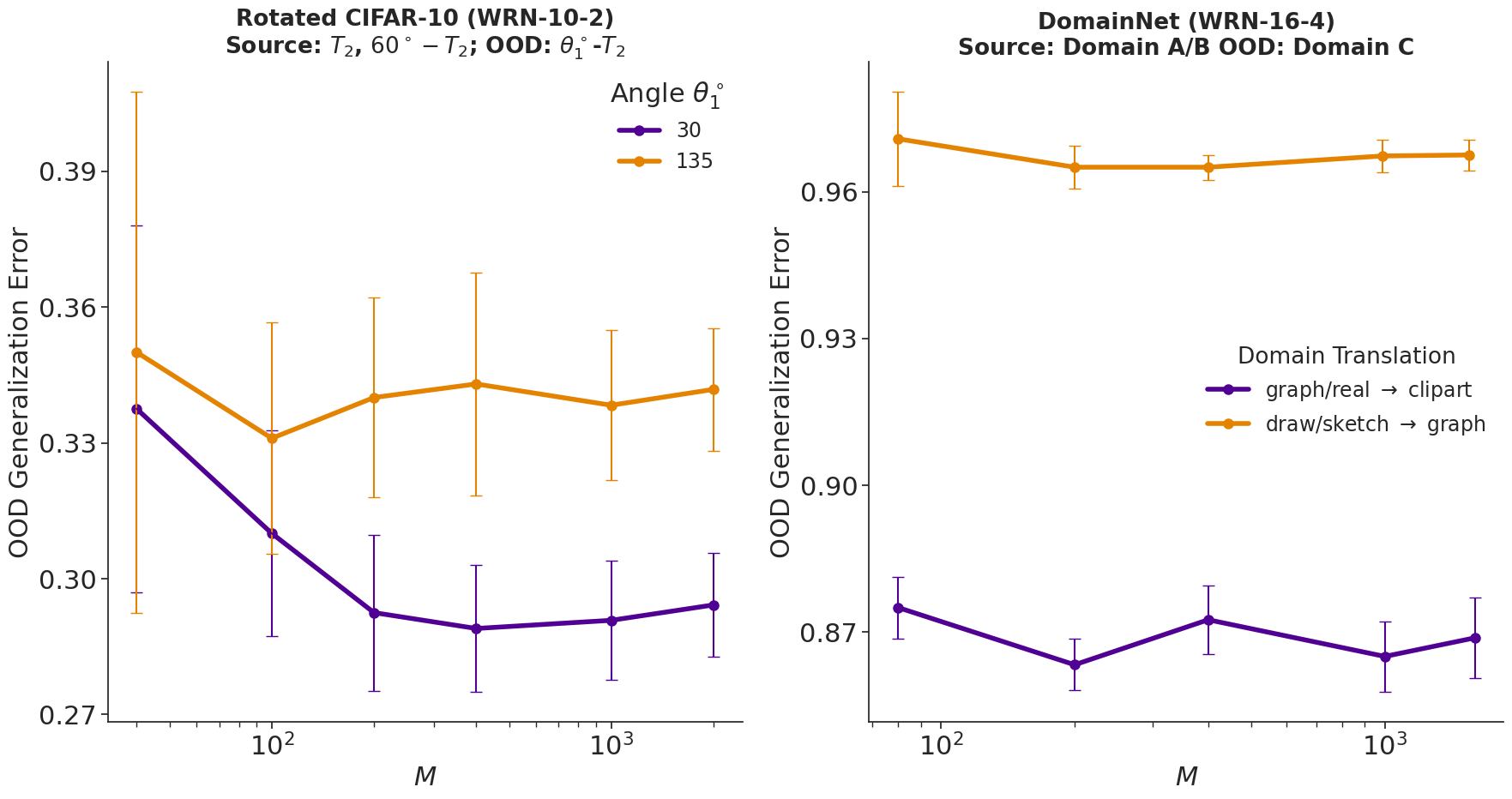}
    \caption{From two benchmark datasets, we plot their OOD generalization error ($y$-axis) as a function of the OOD sample sizes per class ($M$) ($x$-axis), namely
    \textbf{Left:} a classification task from Rotated CIFAR-10, where the OOD rotation is $\theta_1 = 30^{\circ}$ and $135^{\circ}$.
    \textbf{Right:} a classification task from DomainNet with OOD environment of \texttt{graph} and \texttt{clipart} respectively.
    We calculate the OOD generalization error over 10 runs and 3 seeds for the two datasets respectively.  We found a decrease at lower $M$ across all the pairs, and the average error is stable with a decreasing variance for larger values of $M$.
    Error bars indicate $95\%$ confidence intervals.
    }
    \label{fig:syn_m}
\end{figure}

\textbf{Generalization error decreases in power law despite shifts in dataset distributions.}
We take CINIC-10 as an example of distribution shifts between different datasets.
CINIC-10 is a dataset consisting of images selected from CIFAR-10 and down-sampled from ImageNet. We train a network on one subset of CINIC-10, use another subset as OOD samples, and test on it. \Cref{fig:syn_1}d shows that all situations exhibit a consistently decreasing trend, signifying that the OOD generalization error declines with an increase in the number of samples from the training dataset. Consequently, all models can perform well on another dataset without prior knowledge. The reason for this intriguing phenomenon is that although the datasets are separate (CIFAR-10 \vs ImageNet), the data distributions may not be significantly different.

\textbf{No effect of OOD sample size on generalization error.}
Unlike the above experiments where the number of training samples is fixed, we here investigate the impact of OOD samples on generalization error with two representative training environments in OOD datasets ($30^\circ$ and $150^\circ$ as OOD in Rotated CIFAR-10, \texttt{graph} and \texttt{clipart} as OOD in DomainNet).
As depicted in \Cref{fig:syn_m}, the generalization error declines with fewer target samples, however, as the number of OOD samples increases, the generalization error stays flat and is related to the degree of distribution shift. Thus, OOD generalization error is not associated with the number of samples in the unseen target domain, but rather with the training domains and the degree of shift in the target domain.

\textbf{Discussion.}
Even restricted to OOD benchmark datasets, different scenarios of generalization errors can be observed.
We have observed that the models can perform well on the interpolated distribution of training mixtures, that is, the error decreases as the sample size increases.
DNNs are highly non-convex, which makes it difficult to find the global optima, but they can generalize well on interpolated distributions.
The intuitive reason is that DNN performs well on the original distribution and as a continuous function, DNN with locally optimal values at the boundary of the original distributions may generalize equally well on the interpolated distributions.
Meanwhile, if the two distributions are similar enough, a well-trained DNN is also likely to perform well on the interpolated distributions because it has learned the common features between their original distributions.
When the unseen shift occurs, there is not always a downward trend. Models that perform well on the interpolated distribution fail when the test distribution is shifted significantly. In other words, the error may not decrease as the volume of training data grows.
In the next section, we revisit the OOD generalization problem and seek theoretical explanations for the non-descending phenomenon. 
\section{Revisit OOD generalization problem}

\label{sec:ood_def}
We first review those definitions and theories related to the OOD generalization problem.

% \subsection{Preliminaries}
\subsection{Formulation of the OOD generalization problem}
Consider a set of $N_{\mathcal{E}}$ environments (domains)  $\mathcal{E} = \{ e_i \}^{N_{\mathcal{E}}}_{i=1}$.
Let $\mathcal{E}_{s}$ and $\mathcal{E}_{t}$ be source and target environments collected from $\mathcal{E}$, respectively. That is, $\mathcal{E}_{s}\subset \mathcal{E}$ and $\mathcal{E}_{t}\subset \mathcal{E}$.  For each source environment $e\in\mathcal{E}_s$, there exists a training dataset $(X^e, Y^e) = \{(x^e_i, y^e_i)\}^{N_e}_{i=1}$ collected from each training environment. We use $x^e$ and $y^e$ to denote the generic sample and the label variables with respect to the environment $e$, respectively. Furthermore, denote the overall training dataset as $D_s =\{( X^e, Y^e): e \in  \mathcal{E}_{s}\}$.

We only have access to $\mathcal{E}_{s}$ during training, and the target environments are unseen during training and relatively different from the training one.
Moreover, we assume that there exists a ground-truth label process $h$ satisfying $h(x^e) = y^e$.
Then, in an OOD generalization, we would like to find a proper hypothesis function $f$
that minimizes the worst empirical risk among all the training environments $D_s$,
\begin{equation}
    % \arg\min_{f} \sup_{e} R^e[\ell(f(x^e), y)],
    \arg\min_{f} \sup_{e\in \mathcal{E}_s} R^e[\ell(h(x^e), f(x^e)],
\end{equation}
where $R^e$ denotes the ``empirical'' risk calculated over the loss  $\ell$ measuring the difference between the ground truth and the function $f$ for any sample $(x^e, y^e)$ in the training environments.
In this paper, we just consider a binary classification where the label $y^e$ is 0 or 1.

We then specify a set of assumptions about the data-generating environmental process and consider the OOD generalization error of interest. The described multi-environment model is general enough to cover both the i.i.d. case ($\mathcal{E}$ contains a single environment) and the OOD setup ($\geq 2$ environments are allowed) but also supports several other cases.
The difference among the environments is measured by $\mathcal{H}$-divergence as well as for domain adaption case~\citep{ben2010theory}.
\begin{equation}
    d_{\mathcal{H}} [e', e''] = 2 \sup_{f \in \mathcal{H}} | Pr_{x \sim e'}[\mathbf{I}(f)]-Pr_{x \sim e''}[\mathbf{I}(f)],
\end{equation}
where $\mathcal{H} = \{f: \mathcal{X}\mapsto \{0, 1\}\}$ is a hypothesis class on $\mathcal{X}$ and $\mathbf{I}(f) = \{x: f(x) = 1\}$, i.e., all the inputs $x\in\mathcal{X}$ that are classified as class 1 by the hypothesis $f$.

In the context of OOD generalization, the test distribution is inaccessible, necessitating certain assumptions about the test environment to enable generalization.

We address this issue in the results below and introduce generalization guarantees for a specific test environment with data mixture of training distributions.
Let the training environments $\mathcal{E}_s$ contain $N_{S}$ %a set $S$ of
training environments, denoted as $\mathcal{E}_s = \{e^i_s\}^{N_S}_{i=1}$.   % and $|S| = N_{S}$.
The convex hull $Con(\mathcal{E}_s)$ of $\mathcal{E}_s$ is defined as the set of pooled environments given by
\begin{equation}
    Con(\mathcal{E}_s)=\{ \hat{e}\mid \hat{e} = \sum_{i=1}^{N_{S}} \alpha_i e^i_s,
    \alpha_i \in \Delta_{| N_{S}|-1}\},
\end{equation}
where $\Delta_{| N_{S} |-1}$ is the $(| N_{S} |-1)$-th dimensional simplex. The following lemma shows that for any pair of environments such that $e^{\prime}, e^{\prime \prime} \in Con(\mathcal{E}_{s})$, the  $\mathcal{H}$-divergence between  $e^{\prime}$ and  $e^{\prime \prime}$ is upper-bounded by the largest $\mathcal{H}$-divergence measured between the elements of $S$.

\begin{lemma}[Paraphrase from~\citep{albuquerque2019generalizing}] \label{lem:diver}
    Suppose $d_{\mathcal{H}}\left[e^i,e^j\right] \leq \epsilon, \forall i, j \in\left[N_{S}\right]$, then the following inequality holds for the $\mathcal{H}$-divergence between any pair of environments $e^{\prime}, e^{\prime \prime} \in Con(\mathcal{E}_{s})$:
    \begin{equation}
        d_{\mathcal{H}}\left[e^{\prime}, e^{\prime \prime}\right] \leq \epsilon.
    \end{equation}
\end{lemma}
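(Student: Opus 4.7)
The plan is to unfold the definition of the $\mathcal{H}$-divergence on the mixtures and reduce everything to a convex combination of pairwise $\mathcal{H}$-divergences between the original source environments, each of which is already bounded by $\epsilon$. Concretely, I would start by writing any $e', e'' \in Con(\mathcal{E}_s)$ as $e' = \sum_{i=1}^{N_S} \alpha_i e^i_s$ and $e'' = \sum_{j=1}^{N_S} \beta_j e^j_s$ with $\alpha, \beta \in \Delta_{|N_S|-1}$, and use the fact that probability under a mixture is the corresponding convex combination of probabilities, i.e.\ $\Pr_{x \sim e'}[\mathbf{I}(f)] = \sum_i \alpha_i \Pr_{x \sim e^i_s}[\mathbf{I}(f)]$ and analogously for $e''$.

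Next, using the identities $\sum_j \beta_j = 1$ and $\sum_i \alpha_i = 1$, I would rewrite the difference as a double sum
\[
\Pr_{x \sim e'}[\mathbf{I}(f)] - \Pr_{x \sim e''}[\mathbf{I}(f)] = \sum_{i,j} \alpha_i \beta_j \bigl( \Pr_{x \sim e^i_s}[\mathbf{I}(f)] - \Pr_{x \sim e^j_s}[\mathbf{I}(f)] \bigr).
\]
Applying the triangle inequality in absolute value and then taking $\sup_{f \in \mathcal{H}}$, the supremum distributes inside the non-negative convex combination (bounded above term by term), which gives
\[
\sup_{f} \bigl| \Pr_{x \sim e'}[\mathbf{I}(f)] - \Pr_{x \sim e''}[\mathbf{I}(f)] \bigr| \leq \sum_{i,j} \alpha_i \beta_j \cdot \tfrac{1}{2} d_{\mathcal{H}}[e^i_s, e^j_s].
\]
Multiplying by $2$ recovers $d_{\mathcal{H}}[e', e'']$ on the left, and the hypothesis $d_{\mathcal{H}}[e^i_s, e^j_s] \leq \epsilon$ together with $\sum_{i,j} \alpha_i \beta_j = 1$ closes the bound at $\epsilon$.

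I do not anticipate a serious obstacle: the only subtle point is justifying that $\sup_f$ of a sum is bounded by the sum of $\sup_f$'s when the summands are signed, which is why I first pass to absolute values via the triangle inequality before taking the supremum. I would also note a minor edge case worth mentioning in the write-up, namely that the diagonal terms $i=j$ contribute zero to the bound and that the argument does not require $e', e''$ to use the same mixture weights, which is precisely why the convex hull, not just the set $\mathcal{E}_s$ itself, is closed under this divergence bound.
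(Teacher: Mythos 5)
Your proof is correct and complete: the mixture-probability expansion, the double-sum rewriting via $\sum_i\alpha_i=\sum_j\beta_j=1$, the triangle inequality before taking $\sup_f$, and the final convex-combination bound are exactly the standard argument for this lemma, which the paper itself does not reprove but imports from \citet{albuquerque2019generalizing}, where the proof proceeds the same way. No gaps; your remarks about the diagonal terms and the independence of the two sets of mixture weights are accurate.
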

It is suggestive that the $\mathcal{H}$-divergence between any two environments in $Con(\mathcal{E}_{s})$  (\ie, the maximum pairwise $\mathcal{H}$-divergence) can be used to measure the difference from the target environment and can affect OOD generalization ability of the model.

One generally refers to all unseen data as OOD data in literature work, provided such data is not available during training~\citep{ wang2022generalizing, zhou2022domain}. The classic OOD data is defined as follows
\begin{definition}[Out-of-distribution data (General)]
Let $\mathcal{E}_s$ and $\mathcal{E}_t$ be the sets of source and target environments, respectively, and let $\mathcal{Y}=[0, 1]$. Suppose we have a set of $n$ source environments, $\mathcal{E}_s = \{e_s^1, e_s^2, \ldots, e_s^{N_S}\}$.
Let $e_t \in \mathcal{E}$ be an unseen target environment, such that for any data in $e_t$, the following conditions hold
\begin{equation}
e_t \notin \{e_s^1, e_s^2, \ldots, e_s^{N_S}\},
\end{equation}
then the data in $e_t$ is said as \textbf{out-of-distribution data}.
\end{definition}

\subsection{Redefinition for OOD data}
It has been typically assumed in the literature that any test environment that does not appear in the training environments is considered OOD data. However, our findings suggest that OOD data exhibit heterogeneous generalization scenarios despite never having been encountered during training. We now use \Cref{lem:diver} to redefine whether the unseen data is OOD.
Formally,
\begin{definition}[Out-of-distribution data (Refined)]
\label{def:ood}
Let $\mathcal{E}_s$ and $\mathcal{E}_t$ be the sets of source and target environments, respectively, and $\mathcal{Y}$ be the output space. Suppose that we have a set of $n$ source environments, $ \mathcal{E}_s = \{e_s^1, e_s^2, \ldots, e_s^{N_S}\}$
and that there exists a real number $\epsilon > 0$ such that $d_{\widetilde{\mathcal{H}}} [e^i, e^j] \leq \epsilon,\; \forall e^i, e^j \in Con(\mathcal{E}_s)$.  Let $e_t \in \mathcal{E}$ be an unseen target environment
such that, for any data in $e_t$, the following conditions hold:
\begin{equation}
 d_{\widetilde{\mathcal{H}}}(e_t, \hat{e}) > \epsilon, \quad \forall \hat{e} \in Con(\mathcal{E}_s),
\end{equation}
then the data in $e_t$ is said as \textbf{out-of-distribution data},
where
$\hat{e} := \sum_{i=1}^{N_S} \alpha_i e^i_s$, where $\hat{e} \in \text{Con}(\mathcal{E}_s),\, \sum_{i=1}^{N_S} \alpha_i = 1$ is the convex combination of training environments, and $\alpha_i \geq 0$ for all $i$.
\end{definition}

\begin{remark}
    We discuss $\epsilon$ in this definition, noting that the mixture of training environments, $Con(\mathcal{E}_{s})$, can affect generalization to the target distribution. Threshold estimates are typically derived by assessing the model’s performance on a mixture of training data, which reflects the model’s ability on a specific distribution. However, unknown shifts can influence these estimates and are not directly measurable in practice. It is important to indicate that the data mixture in training may not cover all samples in the unseen target environment. Consequently, a model trained solely on this mixture may not be insufficient for generalizing to the target environment.
\end{remark}

\begin{remark}[An intuitive explanation of OOD data definition]
    The conventional intuitive way to understand OOD data is to treat it as inaccessible and not included in the training distribution. However, the various scenarios of generalization errors prompt us to introduce ``convex hull'' empirically, which helps define OOD data and the expected generalization. A classifier works via decision boundaries, which learns from the set of all mixtures obtained from given training distributions, \ie, convex hull. The distance between the target distribution and the decision boundary is a determinant of the classifier’s performance.  In the process of OOD generalization, unlike domain adaptation settings, no data from the test distribution can be observed. Furthermore, not all test samples are located outside the convex hull constructed by the training set, as defined above. For example, in \Cref{fig:eg},
    the model learning from \texttt{Painting} and \texttt{Photo} predicts better in  \texttt{Cartoon} than \texttt{Sketch} due to \texttt{Cartoon} is closer to the training mixture.
\end{remark}

% \subsection{Generalizing to unseen data}
Next, we present the error bounds in the following theorem.

\begin{theorem}[Upper-bounding the risk on unseen data]
\label{thm:error}
Let $\mathcal{E}_s$ be the set of training environments and let $\mathcal{Y}=[0, 1]$
For any unseen environment $e_t \in \mathcal{E}_{t}$ and any hypothesis $f\in \mathcal{H}$, the risk $R_t(f)$ can be bounded in the following ways:

(i) If $e_t \in Con(\mathcal{E}_s)$, data in $e_t$ is considered as ID, then
\begin{equation}
 \begin{split}
     R_t(f) &\leq \sum\nolimits_{i=1}^{N_S}\alpha_i  R_s^i(f) + 2\epsilon + \\
    &\min \{ \mathbb{E}_{\hat{e}} \|h_{s'} -h_t  \|, \mathbb{E}_{e_t} \|h_t -h_{s'} \| \},
 \end{split}
\end{equation}

(ii) If $e_t \notin Con(\mathcal{E}_s)$, data in $e_t$ is considered as OOD, then %is bounded as follows:
\begin{equation}
\begin{split}
     R_t(f) &\leq \sum\nolimits_{i=1}^{N_S} \alpha_i R_s^i(f) + \delta + \epsilon + \\
     & \min \{ \mathbb{E}_{\hat{e}} \|h_{s'} -h_t  \|, \mathbb{E}_{e_t} \|h_t -h_{s'}  \| \}.
\end{split}
\end{equation}
where  $\epsilon$ is the highest pairwise $\widetilde{\mathcal{H}}$-divergence measured between pairs of environments within $\text{Con}(\mathcal{E}_s)$ under $\widetilde{\mathcal{H}} = \{ sign(|f(x) - f'(x)| - t) \mid f, f' \in \mathcal{H}, 0\leq t \leq 1 \}$,
$ \delta := \min_{\alpha_i} d_{\widetilde{\mathcal{H}}} [e_{t}, \sum_{i=1}^{N_S} \alpha_i e_s^i]$ with minimizer $\alpha_i$ be the distance of $e_{t}$ from convex hull $Con(\mathcal{E}_s)$,
$\hat{e} := \sum_{i=1}^{N_S} \alpha_i e^i_s$ is the ``projection'' of $e_t$ onto convex hull $\text{Con}(\mathcal{E}_s)$ with $\alpha_i \geq 0$ for all $i$, $h_{s'}(x) = \sum_{i=1}^{N_S} \alpha_i h_{e_s^i}(x) $  is the labeling function for any $x \in Supp(\hat{e})$ derived from $\text{Con}(\mathcal{E}_s)$ with weights $\alpha_i$, and $h_t$ is the ground-truth labeling function for $e_t$.
\end{theorem}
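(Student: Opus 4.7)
The plan is to prove both parts of the theorem by a unified three-stage argument that peels off the target risk through successive triangle and distribution-transfer inequalities, with the pooled labeling $h_{s'}$ and the projection $\hat{e}$ acting as bridges. The two bounds will arise from the same chain, with the only difference lying in whether the divergence $d_{\widetilde{\mathcal{H}}}[e_t,\hat{e}]$ is controlled by $\epsilon$ (when $e_t$ itself lies in $\mathrm{Con}(\mathcal{E}_s)$, invoking \Cref{lem:diver}) or by $\delta$ (the distance of $e_t$ to the hull, by definition in the OOD case). The $\min\{\cdot,\cdot\}$ term in the statement will emerge because the triangle-inequality insertion of $h_{s'}$ can be carried out either under $e_t$ or under $\hat{e}$, producing two comparable upper bounds of which one retains the smaller.

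First I would apply the triangle inequality to $|f(x)-h_t(x)| \le |f(x)-h_{s'}(x)| + |h_{s'}(x)-h_t(x)|$ and take expectation under $e_t$, obtaining $R_t(f) \le \mathbb{E}_{e_t}[|f-h_{s'}|] + \mathbb{E}_{e_t}\|h_{s'}-h_t\|$; the alternative ordering that first transfers to $\hat{e}$ then applies the triangle inequality yields the symmetric term $\mathbb{E}_{\hat{e}}\|h_{s'}-h_t\|$, justifying the minimum. Second, to replace the integrating measure $e_t$ by $\hat{e}$, I would invoke the layer-cake identity $\mathbb{E}[g]=\int_0^1 Pr[g>t]\,dt$ for the $[0,1]$-valued function $g=|f-h_{s'}|$, so that $|\mathbb{E}_{e_t}[g]-\mathbb{E}_{\hat{e}}[g]|$ is dominated by $\sup_{t\in[0,1]}|Pr_{e_t}[g>t]-Pr_{\hat{e}}[g>t]|$. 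The indicator $\mathrm{sign}(|f(x)-h_{s'}(x)|-t)$ lies in $\widetilde{\mathcal{H}}$ by definition, so the supremum is at most $d_{\widetilde{\mathcal{H}}}[e_t,\hat{e}]$; this equals $\delta$ in the OOD case by definition of the projection, and is bounded by $\epsilon$ in the ID case by \Cref{lem:diver} applied to the pair $e_t,\hat{e}\in\mathrm{Con}(\mathcal{E}_s)$.

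Third, since $\hat{e}=\sum_i\alpha_i e_s^i$, linearity of expectation gives $\mathbb{E}_{\hat{e}}[|f-h_{s'}|]=\sum_i\alpha_i\mathbb{E}_{e_s^i}[|f-h_{s'}|]$, and one further triangle inequality inserting each $h_{e_s^i}$ produces $\sum_i\alpha_i R_s^i(f)+\sum_i\alpha_i\mathbb{E}_{e_s^i}[|h_{e_s^i}-h_{s'}|]$. Expanding $h_{s'}=\sum_j\alpha_j h_{e_s^j}$ bounds the residual by $\sum_{i,j}\alpha_i\alpha_j\mathbb{E}_{e_s^i}[|h_{e_s^i}-h_{e_s^j}|]$, and invoking \Cref{lem:diver} once more (together with the layer-cake trick applied to $|h_{e_s^i}-h_{e_s^j}|$) bounds each pairwise labeling discrepancy by $\epsilon$; the double sum then collapses to $\epsilon$ using $\sum_i\alpha_i=\sum_j\alpha_j=1$. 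Summing the contributions produces $2\epsilon$ in the ID case ($\epsilon$ from the $e_t$-to-$\hat{e}$ transfer plus $\epsilon$ from the labeling residual) and $\delta+\epsilon$ in the OOD case, which combined with the labeling discrepancy from the first step reproduces the two stated inequalities.

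The main obstacle I anticipate is the second step: the divergence $d_{\widetilde{\mathcal{H}}}$ is engineered precisely to dominate $|\mathbb{E}_{e_t}[g]-\mathbb{E}_{\hat{e}}[g]|$ for a $[0,1]$-valued integrand, but this requires carefully verifying that $\mathrm{sign}(|f(x)-h_{s'}(x)|-t)\in\widetilde{\mathcal{H}}$ for every threshold $t$, which is exactly why the paper enlarges from the usual $\mathcal{H}$-divergence to $\widetilde{\mathcal{H}}$. A secondary subtlety is ensuring that the pairwise labeling discrepancies $\mathbb{E}_{e_s^i}[|h_{e_s^i}-h_{e_s^j}|]$ are genuinely controlled by $\epsilon=\sup_{i,j}d_{\widetilde{\mathcal{H}}}[e_s^i,e_s^j]$ rather than by a separate labeling-divergence quantity; this forces one to interpret the ground-truth labels as belonging to $\mathcal{H}$ so that the same layer-cake argument applies a second time, and to confirm that the weighted double sum collapses to $\epsilon$ without spurious factors such as $\sum_j\alpha_j^2$.
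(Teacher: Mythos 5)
Your overall architecture (bridge through the projection $\hat{e}$ and the pooled labeling $h_{s'}$, distinguish the two cases only by whether $d_{\widetilde{\mathcal{H}}}[e_t,\hat{e}]$ is controlled by $\epsilon$ or $\delta$) matches the paper's, but you re-derive the base inequality from scratch where the paper simply invokes the single-source/single-target bound of Zhao \etal\ with source $\hat{e}$, i.e.\ $R_t(f)\leq R_{\hat{e}}(f)+d_{\widetilde{\mathcal{H}}}[\hat{e},e_t]+\min\{\mathbb{E}_{\hat{e}}\|h_{s'}-h_t\|,\mathbb{E}_{e_t}\|h_t-h_{s'}\|\}$, then rewrites $R_{\hat{e}}(f)$ as $\sum_i\alpha_i R_s^i(f)$ and bounds $\sum_i\alpha_i d_{\widetilde{\mathcal{H}}}[e_s^i,e_t]$ via the triangle inequality $d_{\widetilde{\mathcal{H}}}[e_s^i,e_t]\leq d_{\widetilde{\mathcal{H}}}[e_s^i,\hat{e}]+d_{\widetilde{\mathcal{H}}}[\hat{e},e_t]$, giving $\epsilon+\epsilon$ in the ID case and $\epsilon+\delta$ in the OOD case. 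In the paper's accounting both additive constants are divergences between marginals; no labeling-discrepancy term among the sources ever appears.

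Your third step has a genuine gap precisely at the point you flag as a ``secondary subtlety.'' After inserting $h_{e_s^i}$ you are left with the residual $\sum_{i,j}\alpha_i\alpha_j\,\mathbb{E}_{e_s^i}\bigl[|h_{e_s^i}-h_{e_s^j}|\bigr]$, and you claim this is bounded by $\epsilon$ via \Cref{lem:diver} plus the layer-cake identity. That cannot work: the $\widetilde{\mathcal{H}}$-divergence controls quantities of the form $\bigl|\mathbb{E}_{e_s^i}[g]-\mathbb{E}_{e_s^j}[g]\bigr|$ for a fixed $[0,1]$-valued $g$ built from hypotheses --- a difference of expectations of the \emph{same} function under \emph{two} marginals --- whereas your residual is the expectation of a labeling disagreement under a \emph{single} marginal. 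Two source environments with identical marginals over $\mathcal{X}$ have $d_{\widetilde{\mathcal{H}}}=0$ yet can have $\mathbb{E}_{e_s^i}[|h_{e_s^i}-h_{e_s^j}|]=1$ (pure concept shift), so no amount of placing the $h$'s inside $\mathcal{H}$ rescues the step. Consequently your chain yields $\sum_i\alpha_i R_s^i(f)+\epsilon+(\text{residual})+\min\{\cdot,\cdot\}$ in the ID case and $\sum_i\alpha_i R_s^i(f)+\delta+(\text{residual})+\min\{\cdot,\cdot\}$ in the OOD case, neither of which matches the stated $2\epsilon$ and $\delta+\epsilon$ unless the sources share a common labeling function. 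The fix is to avoid generating the residual at all: treat the mixture risk as the $\alpha$-weighted sum of per-environment risks (each sample scored against its own environment's labels), as the paper does, and obtain the extra $\epsilon$ from the divergence triangle inequality $d_{\widetilde{\mathcal{H}}}[e_s^i,\hat{e}]\leq\epsilon$ rather than from a labeling discrepancy. Your first two steps (the two orderings producing the $\min$, and the threshold/layer-cake transfer of measure from $e_t$ to $\hat{e}$) are a correct reconstruction of the cited lemma and are fine.
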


\begin{proof}
    Let the source environment and target environments be $\mathcal{E}_s$ and  $\mathcal{E}_t$, respectively. The risk $R_t(h)$ can be bounded by~\citet{zhao19learning} for single-source and single-target domain adaptation cases as follows:
    \begin{equation} \label{eq:risk_t_proof}
    \begin{split}
        R_t(f) &\leq R_s(f) + d_{\widetilde{\mathcal{H}}} [e_{s}, e_{t}] + \\
        & \min \{ \mathbb{E}_{ e_s} \|h_s -h_t  \|, \mathbb{E}_{ e_t} \|h_t -h_s  \| \}.
    \end{split}
    \end{equation}
where $\widetilde{\mathcal{H}} = \{ sign(|f(x) - f'(x)| - t) \mid f, f' \in \mathcal{H}, 0\leq t \leq 1 \}$.
To design a generalized constraint for the risk of any unseen domain based on quantities associated with the distribution seen during training, we need to start by rewriting \Cref{eq:risk_t_proof} and considering $e_t$ and its ``projection" onto the convex hull of $\hat{e} \in Con(\mathcal{E}_s)=\{ \hat{e_s}| \hat{e_s} = \sum_{i=1}^{N_S} \alpha_i e_s^i, \sum_{i=1}^{N_S} \alpha_i = 1,\alpha_i \geq 0\}$.

For that, we introduce the labeling function $h_{s'}(x) = \sum_{i=1}^{N_S} \alpha_i h_{e_s^i}(x)$ which is an ensemble of the respective labeling functions and each weighted by the responding mixture coefficients from $Con(\mathcal{E}_s)$.
$ R_t(f)$ can thus be bounded as
\begin{equation}
    \begin{split}
         R_t(f) \leq & R_{\hat{e}}(f) +  d_{\widetilde{\mathcal{H}}} [\hat{e}, e_{t}] + \\
         &\min \{ \mathbb{E}_{\hat{e}} \|h_{s'} -h_t  \|, \mathbb{E}_{e_t} \|h_t -h_{s'}  \| \}.
    \end{split}
\end{equation}

Similarly to \Cref{lem:diver} for the case where $\mathcal{H} = \widetilde{\mathcal{H}}$, $e' = e_t$ and $e'' = \hat{e}$, it follows that
\begin{equation}
\begin{split}
        R_t(f) \leq & \sum\nolimits_{i=1}^{N_S} \alpha_i R_s^i(f) + \sum\nolimits_{i=1}^{N_S}\alpha_i d_{\widetilde{\mathcal{H}}} [e_s^i, e_{t}] + \\
         &\min \{ \mathbb{E}_{\hat{e}} \|h_{s'} -h_t  \|, \mathbb{E}_{e_t} \|h_t -h_{s'}  \| \}.
\end{split}
\end{equation}

Using the triangle inequality for the $\widetilde{\mathcal{H}}$-divergence along with \Cref{lem:diver}, we can bound the $\widetilde{\mathcal{H}}$-divergence between $e_t$ and any source environment $e_s^i$, $d_{\widetilde{\mathcal{H}}} [e_s^i, e_{t}]$, according to our new \Cref{def:ood}.

If $e_t \in Con(\mathcal{E}_s)$, the data in an environment $e_t$ is defined as in-distribution data, which means even $e_t$ has never been seen at training, it is still located in the convex hull of training mixture.
According to \Cref{lem:diver} for case I where $e' = e_t$ and $e'' = \hat{e}$, and case II where $e' = e_s^i$ and $e'' = \hat{e}$, the following inequality holds for the $\mathcal{H}$-divergence between any pair of environments $e',e'' \in Con(\mathcal{E}_s)$.
 \begin{equation}
 \begin{split}
    d_{\widetilde{\mathcal{H}}} [e_s^i, e_{t}] &\leq d_{\widetilde{\mathcal{H}}} [e_s^i,\hat{e}]
 + d_{\widetilde{\mathcal{H}}} [\hat{e}, e_{t}] \\
 & \leq  \epsilon + \epsilon  = 2\epsilon.
 \end{split}
\end{equation}

And if $e_t \notin Con(\mathcal{E}_s)$, the data in an environment $e_t$ is defined as out-of-distribution data, and we have
 \begin{equation}
 \begin{split}
    d_{\widetilde{\mathcal{H}}} [e_s^i, e_{t}] &\leq d_{\widetilde{\mathcal{H}}} [e_s^i,\hat{e}]
 + d_{\widetilde{\mathcal{H}}} [\hat{e}, e_{t}] \\
 & \leq \epsilon + \delta ,
 \end{split}
\end{equation}
where $ \delta := \min_{\alpha_i} d_{\widetilde{\mathcal{H}}} [e_{t}, \sum_{i=1}^{N_S} \alpha_i e_s^i]$ with minimizer $\alpha_i$ be the distance of $e_{t}$ from convex hull $Con(\mathcal{E}_s)$.
Thus, we get an upper-bounded $d_{\widetilde{\mathcal{H}}} [e_s^i, e_{t}]$ based on our new definition.

Finally we rewrite the bound  on $R_t(f)$:
 if $e_t \in Con(\mathcal{E}_s)$, the data in an environment $e_t$ is defined as in-distribution data
\begin{equation}
 \begin{split}
     R_t(f) &\leq \sum\nolimits_{i=1}^{N_S} \alpha_i R_s^i(f) + 2\epsilon + \\
    &\min \{ \mathbb{E}_{\hat{e}} \|h_{s'} -h_t  \|, \mathbb{E}_{e_t} \|h_t -h_{s'} \| \}.
 \end{split}
\end{equation}
 And if $e_t \notin Con(\mathcal{E}_s)$, the data in an environment $e_t$ is defined as out-of-distribution data
\begin{equation}
\begin{split}
     R_t(f) &\leq \sum\nolimits_{i=1}^{N_S} \alpha_i R_s^i(f) + \delta + \epsilon + \\
     & \min \{ \mathbb{E}_{\hat{e}} \|h_{s'} -h_t  \|, \mathbb{E}_{e_t} \|h_t -h_{s'}  \| \}.
\end{split}
\end{equation}

\end{proof}
\begin{remark}[Intuitive interpretation of \Cref{thm:error}]
    Compared with the upper bound proposed by~\citet{albuquerque2019generalizing}, our theorem derives different upper bounds for unseen data in different cases based on the new definition. For unseen data within the convex hull of training mixture, the upper bound is tighter because $e_t$ is assumed to be close enough to the training mixture. The upper bound is more relaxed for OOD data in $e_t$ because additional differences between the target environment and the source environment need to be taken into account (denoted by $\delta$).
    Such boundaries explain why data mixture in training cannot guarantee models' OOD capability. The intuitive interpretation of this theorem can also be verified in \Cref{fig:eg}.
    Different risk bounds on unseen data mean that in some cases, the performance of the model may be affected by different bounds even if a large amount of training data is provided, especially if the multi-domain data collected in the real world has different data quality, task difficulty, and distribution gap.
    This all means that even with more training samples, the OOD generalization ability of the model may be limited. In other words, the model will not improve indefinitely even if more training data is added.
\end{remark}

\begin{remark}[The importance of diverse data]
    We further highlight that the introduced results also provide insights into the value of obtaining diverse datasets for generalization to OOD samples in practice. More diverse the environments where the dataset occurs during training, it is more likely that the unseen distribution falls within the convex hull of the training environments. That is, even though the dataset has never been seen before, it may still belong to ID data.
    Specifically, the diversity of training samples can help the model learn more robust and general feature representations that are critical for identifying and processing new, unseen data distributions.
    Thus, as well as the dataset size, the diversity of training samples is also crucial for better generalization.
\end{remark}
\begin{remark}[Widely used OOD techniques]
    We next discuss widely used OOD techniques based on the introduced framework in order to demonstrate the verification of our theory and interesting observations in related algorithms.
    Such techniques are crucial for enhancing the robustness and OOD generalization of DNNs, including data augmentation, pre-training, and hyperparameter optimization. We then define a novel selection algorithm that relies only on training data. While the risk of utilizing the training mixture can be minimized, it is able to measure the value of data and adapt the learning scope dynamically. It is worth noting that our empirical result is that the proposed algorithm can achieve success even when no information is known about the target environment.
\end{remark}

\section{Can we break OOD limitation to improve model's capability?}
\label{sec:method}
\emph{Can we make OOD samples as generalizable as possible}, so that we can improve the generalization ability of the models and we can provide reasonable explanations when we do not have any prior knowledge about the target distribution?

\subsection{Observation for widely used OOD methods}
\label{sec:method_pop}
When given unknown samples that are never seen during training, the number of available options to alleviate the degradation caused by OOD samples is limited.
Thus we need to use some popular techniques to process unseen data to make them as ``close'' as possible to the data mixture in training, such as hyperparameter optimization, data augmentation, pre-training, and other DomainBed algorithms.

\begin{figure}[!tb]
    \centering
    \includegraphics[width=.9\linewidth]{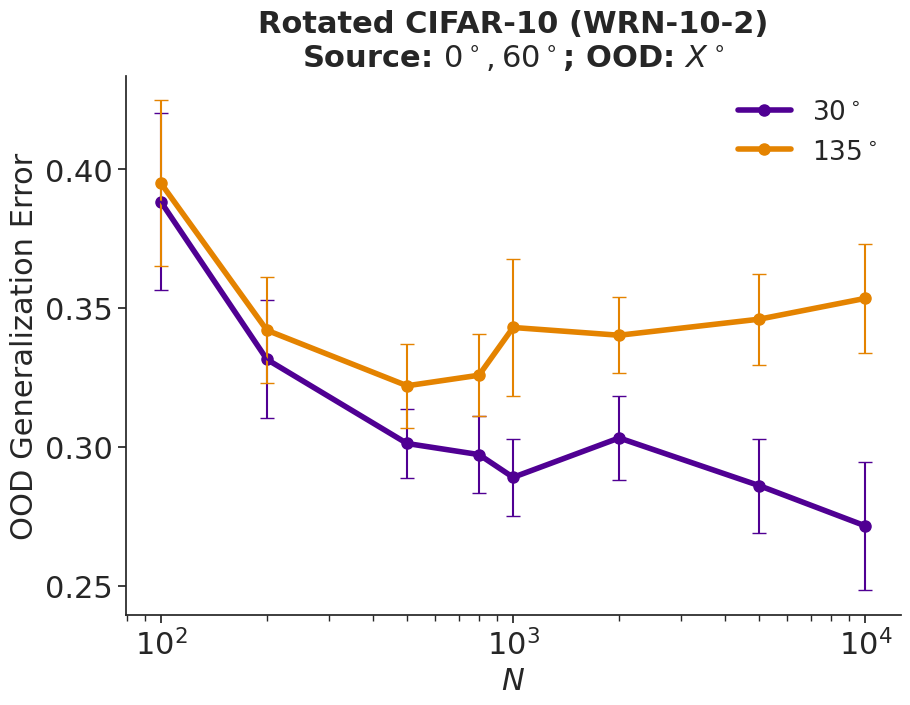}
    \caption{For $0^{\circ}$ and $60^{\circ}$ as source samples, and $135^{\circ}$ and $30^{\circ}$ as OOD samples in Rotated CIFAR-10 sub-task $T_2$ respectively, we investigate the effect of \textbf{hyper-parameter tuning}. We record the best set of hyper-parameters with a validation set and test it on an unseen target. It can still be observed that the same error trend in our previous results since manipulating the training set is irrelevant for the test set, and the distribution distance is the main influencing factor.}
    \label{fig:optim}
\end{figure}

\textbf{Effect of hyperparameter optimization.}
The first technique we aim to question is whether the best-performing model trained on the training set can effectively reduce the generalization error on an unseen distribution. Similarly as by ~\citet{kumar2022fine}, we employ an easy two-step strategy of linear probing and then full fine-tuning (50-50 epochs).  It can be observed in \Cref{fig:optim} that relying on hyperparameter optimization alone is not sufficient to improve model performance in the case of handling OOD samples.
This implies that the optimal performance of a model on training or validation data does not guarantee its success on unseen samples. The reason for this is that it adjusts primarily for the nature of training data, but does not offer enough insight regarding the OOD samples.

\begin{figure}[tb]
    \centering
    \includegraphics[width=.95\linewidth]{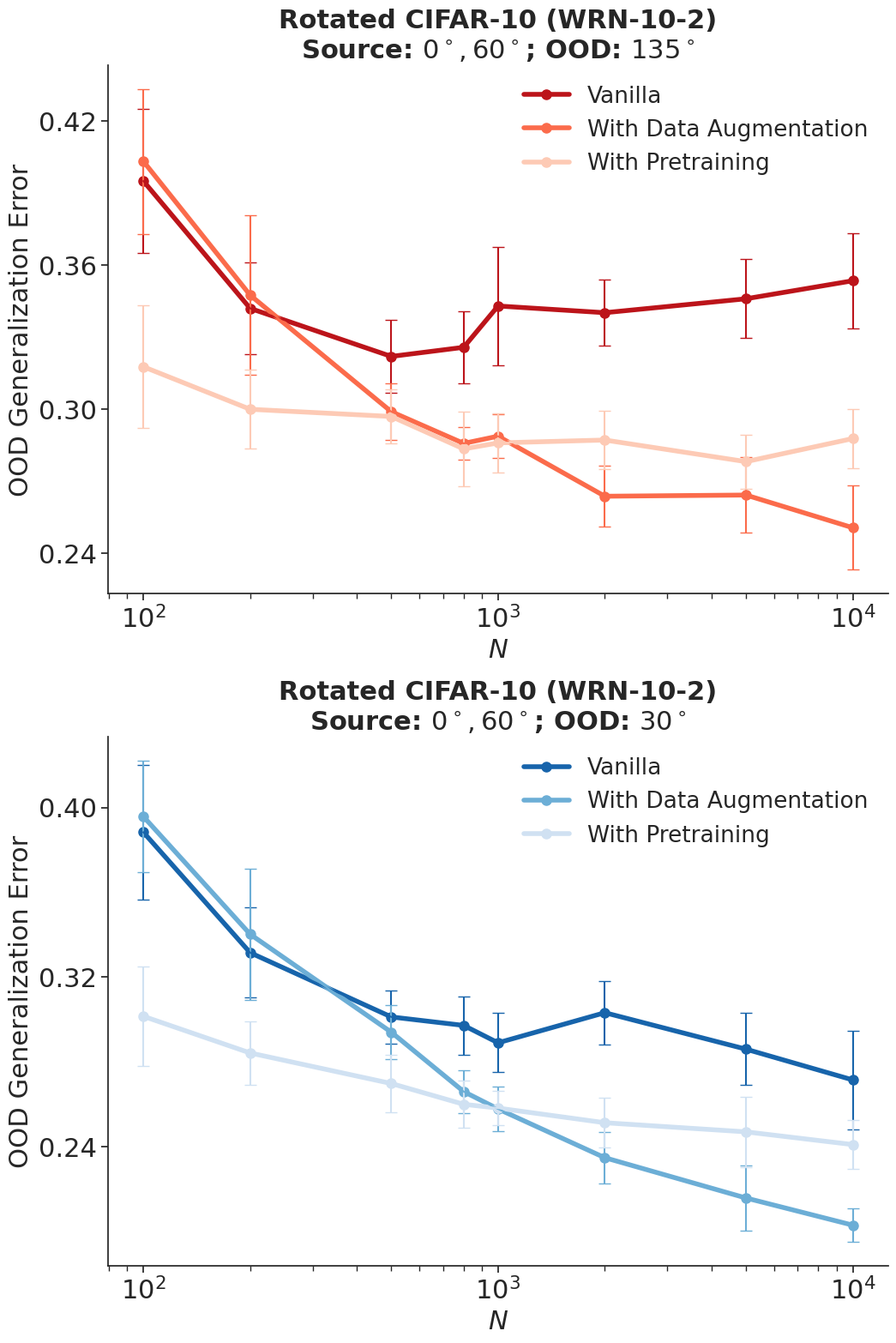}
    \caption{For $0^{\circ}$ and $60^{\circ}$ as training samples, and $135^{\circ}$(Upper) and $30^{\circ}$(Bottom) as OOD samples in Rotated CIFAR-10 sub-task $T_2$ respectively, we train a WRN-10-2 model with target samples $M=400$, under the following settings:
    (1) \emph{Vanilla}, that is, without any popular techniques (darkest color),
    (2) \emph{Data augmentation} with random copping, flips and padding (medium color), and
    (3) \emph{Pre-training} followed by fine-tuning (lightest color). WRN-10-2 is pre-trained on ImageNet images from CINIC-10 (100 epochs and 0.01 learning rate followed~\citet{de2023value}).
    For smaller $N$, augmentation worsens the OOD generalization error but shows improvement with increasing samples, not disturbed by rotation. After pre-training, WRN-10-2 initially has a dramatic drop in error but still rises for unseen samples, especially on $135^{\circ}$.
    Error bars indicate $95\%$ confidence intervals (10 runs).}
    \label{fig:da_pre}
\end{figure}

\textbf{Effect of data augmentation.}
To evaluate whether augmentation works, we use two different rotations as unseen tasks for WRN-10-2 on Rotated CIFAR-10, \ie, $135^{\circ}$ and $30^{\circ}$ as OOD tasks.
The results in \Cref{fig:da_pre} (medium color) show that the effectiveness of data augmentation increases as the amount of data increases. Initially, for a small dataset, augmentation may have a negative effect. However, as the size of training data grows, augmentation helps break the bias (digit \vs rotation) in the training data.
Augmentation overcomes the overfitting phenomenon for semantic noise, especially on $135^\circ$ as an OOD task.
Its effectiveness stems from the ability to introduce diversity into the training data patterns.
Data augmentation helps create augmented samples that better represent real-world variations and challenges. This enables the model to learn more robust and generalizable features, improving its performance during tests on OOD samples.

\textbf{Effect of pre-training.}
We repeat the same target tasks for pre-training on Rotated CIFAR-10.
As shown in \Cref{fig:da_pre} (lightest color), even with a limited sample size, pre-training demonstrated a more robust improvement.
And as the sample size increases, the improvement of pre-training becomes more minor, but still better than the baseline.
We can conclude that pre-training is a useful tool for improving the ability to unseen distributions.
Pre-training on large and diverse datasets can enhance the model's ability to learn a broad and generalized set of representations, which can subsequently improve its performance on OOD samples.
However, its effectiveness depends on how well they can transfer the learned representations to target tasks. The quality of the representations must have been tested carefully based on target tasks.

\begin{figure}
    \centering
    \includegraphics[width=.9\linewidth]{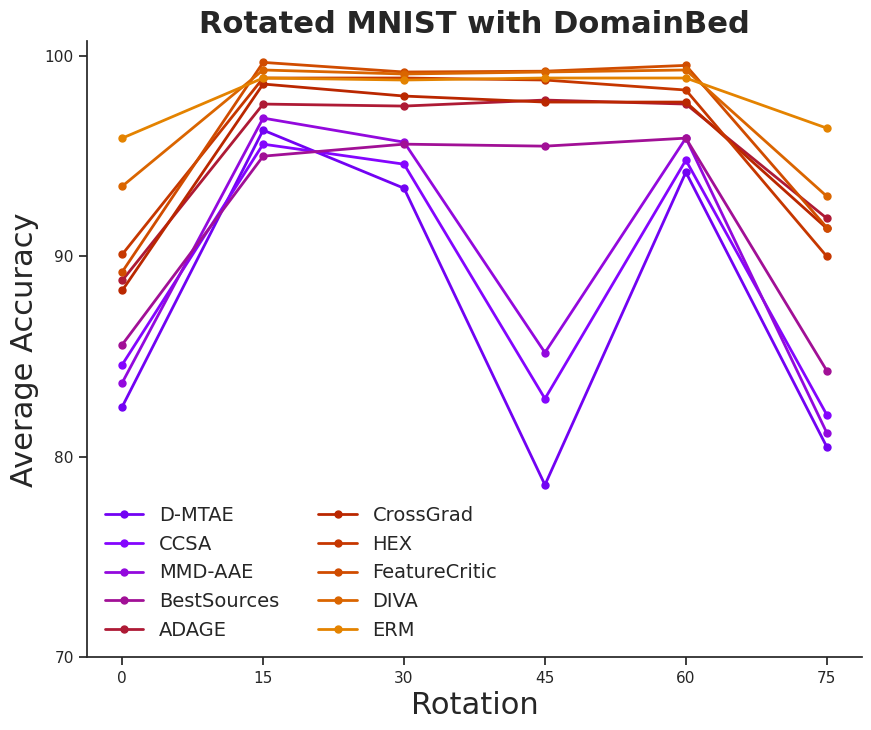}
    \caption{Five of the six domains ($\{ 0^{\circ}, 15^{\circ}, 30^{\circ}, 45^{\circ}, 60^{\circ}, 75^{\circ}\}$) in Rotated MNIST  are used as training sets and the rest as an unseen task.
    We validate the performance data of 10 algorithms in DomainBed on unseen rotations. The higher the OOD generalization accuracy, the better the algorithm in DomainBed is at handling unseen targets. It can be observed that all algorithms perform poorly for $0^{\circ}$ and $75^{\circ}$, demonstrating sensitivity to unseen shifts.}
    \label{fig:domainbed}
\end{figure}

\textbf{The sensitivity of distribution distance in DomainBed algorithms.}
We can see the performance of different algorithms on the rotated MNIST dataset in \Cref{fig:domainbed}.
Most algorithms (like ERM and DIVA~\citep{ilse2020diva}) perform well at rotations of $15^{\circ}-60^{\circ}$, but mediocre at $0^{\circ}$ and $75^{\circ}$. This trend verifies that different algorithms are equally sensitive to changes in data distribution. That is, they perform well in the environment within the data mixture but perform poorly in the environment with distant distribution.  Besides, the performance of some algorithms fluctuates greatly from different angles. For example, MMD-AAE~\citep{li2018domain} and BestSources~\citep{8451318} perform well at lower angles, but their performance degrades more after $45^{\circ}$.
This may be due to the different feature extraction processes of the algorithms, resulting in generalization failure at $45^{\circ}$. The performance of existing algorithms is sensitive to the distribution distance, which can help us understand the generalization ability of different algorithms in OOD settings.

\subsection{Selection of the training samples}
\label{sec:method_ours}
Recent work suggests that a careful selection of the most valuable samples can prevent models from focusing too much on the noise in training, thereby having a potential reduction of the overfitting risk~\citep{ngiam2018domain, yoon2020data}.
However, data selection for the OOD generalization problem meets a train-test mismatch challenge, since it is impossible to anticipate the distribution of target data that the model will encounter in the future.

\subsubsection{Diversity learning for OOD generalization}
Diversity is a desirable dimension in OOD generalization problems.
We can explore dynamically adjusting weight and diversity centered on the training data, so that information from the source can be \emph{``partially adapted''} to the target.
For example, style diversity in the dog data, since each style (\texttt{picture}, \texttt{sketch}, \texttt{painting}, \etc) is different.
Our empirical result suggests that diversity ensures the expansion of the convex hull and contains more unseen samples as ID.
To increase training diversity, we can use random weighting, since it can help to increase the amount of domain-agnostic information available and promote the robustness of neural networks.
Although random selection can capture some of the information in training domains, it depends on the distance between domains and cannot correspond to each training domain sample one-to-one, and therefore cannot be updated using traditional stochastic gradient descent. Fortunately, well-established solutions from the field of reinforcement learning (RL) are readily available to update the selection sampler and weight each sample individually to the selection criteria~\citep{yu2022can, yoon2020data}.

\begin{figure}[!tb]
    \centering
    \includegraphics[width=\linewidth]{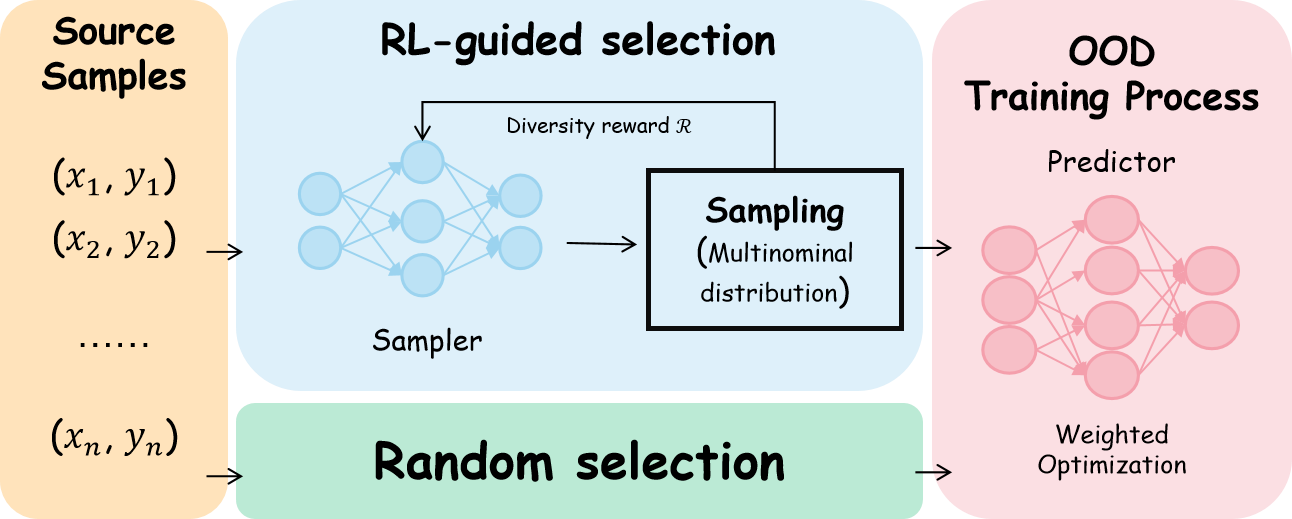}
    \caption{Block diagram of Selecting the training samples
    A set of training domain samples serves as training input.
    (1) \emph{RL-guided selection:} Learns from training samples (with shared parameters between batches), updates with diversity-related rewards, and returns selection vectors (corresponding to a multinomial distribution).
    (2) \emph{Random selection:} Randomly output select vectors to a batch of training samples.
    The OOD predictor is trained only on training samples with selection vectors, using gradient descent optimization.}  \label{fig:sampler}
\end{figure}

\subsubsection{Framework Details}
We demonstrate that our selection performance in two OOD tasks: (1) $0^\circ$ and $60^\circ$ as training environments and $135^\circ$ as OOD samples in Rotated CIFAR-10; (2) \texttt{clipart} and \texttt{sketch} as training environments and \texttt{painting} as OOD samples in PACS.
We formalize the components of the sample selection in OOD training set optimization as shown in \Cref{fig:sampler}, including a training dataset $D_s$ with size $n$, a predictor $f$ for OOD generalization and an encoder $f_{en}$ as a feature extractor.
We first shuffle and randomly partition $D_s$ into mini-batches, and then perform the data selection process.
(i) As for \emph{random selection}, we provide a random vector for each mini-batch selection.
(ii) As for \emph{RL-guided selection}, we adopt REINFORCE~\citep{williams1992simple} algorithm to train a selector $\mathcal{F}$ for the optimization of OOD generalization. Our goal is to learn an optimal policy $\pi$ to maximize the diversity of selected subsets from each mini-batch.

The process of RL-guided Selection is:
\emph{First,} the encoder $f_{en}$ transforms a batch of data $B_t$ into its representation vector $v_t (v_t = f_{en}(B_t))$ at each step $t$.
\emph{Secondly,} the policy $\pi$ outputs the batch of state $s_t$, so that each $v_t$ is associated with a probability of diversity representing how likely it is going to be selected. The selected subset $\hat{B_t}$ is then obtained by ranking their probability.
\emph{Thirdly,} the selector $\mathcal{F}$ as well as encoder $f_{en}$ are finetuned by the selected subset $\hat{B_t}$. The $\mathcal{F}$ is updated with REINFORCE algorithm and the reward function $\mathcal{R}$. No target knowledge is required, and the reward is only measured via max divergence of source samples. We define the divergence $Diver(B)$ as the total of all distances between any pairs $(v_i, v_j)$ within the batch $B$:
\begin{equation}\label{eq:divergence}
    Diver(B) = \sum d(v_i, v_j), (v_i, v_j)\in B
\end{equation}
where $d(\cdot, \cdot)$ is the distance function.
Maximizing the dispersion of samples can effectively increase the coverage of learned representations of well-trained models, leading to a more diverse content for the target environment. See Algorithm \ref{alg:rl_method} for more details.
The expansion of convex hull allows models to learn more diverse types of features and patterns.

\begin{algorithm}[!tb] \small
    \caption{RL-guided data selection}
    \label{alg:rl_method}
    \KwIn{Training set $D_s$, Predictor $f$ (including encoder $f_{en}$), Epoch $M$, Reward function $\mathcal{R}$, learning rate $\alpha$, Discount factor $\gamma$}
    \KwOut{selected set, fine-tuned predictor $f$, policy $\pi$, data selector $\mathcal{F}$}
    Initialize selection policy $\pi$ and data selector $\mathcal{F}$\;
    \ForEach{episode $m$}{
    Shuffle and randomly partition $D_s$ into mini-batched with same size $N$: $D_s = \{B_t \}_{t=1}^{T} = \{ B_1, B_2, \dots, B_T\}$ \;
    Initialize an empty list for episode history $\Gamma$\;
    \ForEach{$B_t \in D_s$}{
    $v_t = f_{en}(B_t)$\;
    Obtain state $s_t$\;
    Obtain action $a_t$ by sampling based on $\pi(s_t)$ \;
    Obtain the selected set $\hat{B_t}$ by ranking  $a_t$\;
    Fine-tune predictor $f$ with $\hat{B_t}$\;
    Calculate reward $r_t = \mathcal{R}(\hat{B_t}, \mathcal{F})$ with \Cref{eq:divergence}\;
    Store $(s_t, a_t, r_t)$ to episode history $\Gamma$\;
    }
    \ForEach{$(s_t, a_t, r_t) \in \Gamma$}{
    Update policy weight and selector weights with REINFORCE algorithm
    with $\mathcal{R}$,  $\alpha$, and $\gamma$.}
    Clear episode history $\Gamma$\;
    }
    return $f$, $\pi$, and $\mathcal{F}$
\end{algorithm}

\begin{figure*}[tb]
    \centering
    \subfloat[Generalization error on Rotated CIFAR-10 and PACS]{\includegraphics[width=0.6\linewidth]{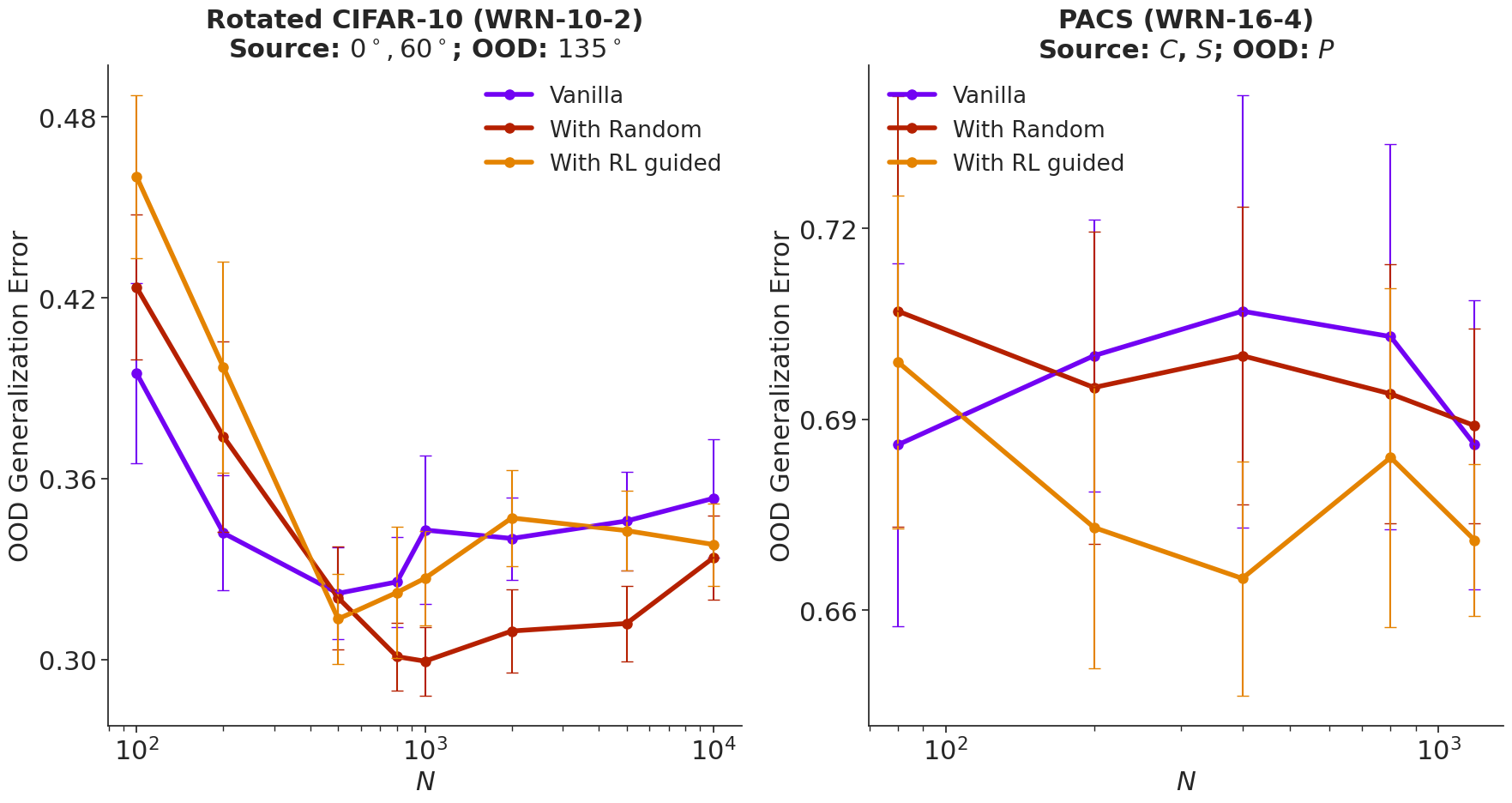} \label{fig:rl}}
    \subfloat[Training reward on PACS]{\includegraphics[width=0.35\linewidth]{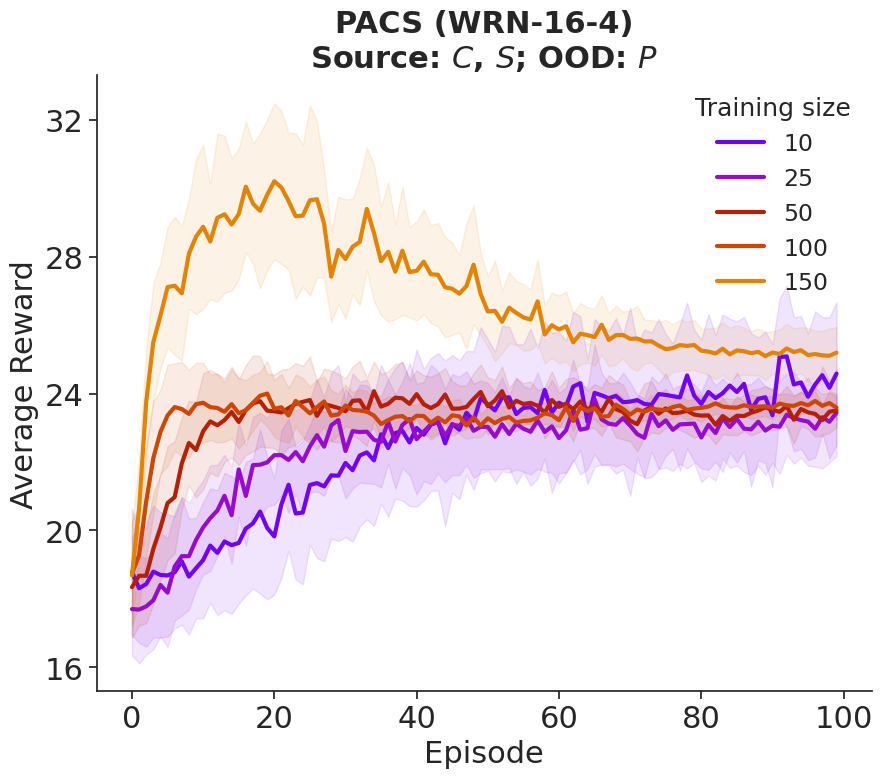}\label{fig:rl-reward}}
    \caption{The selecting results for OOD training.
    \textbf{(a)} Generalization error on the target distribution for Rotated CIFAR-10 (left) and PACS (right) using weighted subset. Here, we present three settings on OOD tasks: \textbf{Vanilla}, \textbf{With Random}, and \textbf{With RL-guided weighted objective}. All settings only get information from the training environment. For a small number of samples, the error of the latter two is high, but as the sample size grows, they can effectively lower the OOD generalization error. 
    The effect of OOD data arising due to intra-class nuisances (Rotated CIFAR-10) is more sensitive to random weights since there is no semantic-level shift and more correlation-level.
    Unlike in CIFAR-10 tasks, we observe that in PACS, OOD generalization error falls significantly due to semantic-level shift. In other words, if we use a weighted training subset, then we always obtain some benefit on OOD samples, whether random or guided. 
    \textbf{(b)} Training reward on PACS of RL-guided selection method. Reward on all training sizes converge to the optimal reward associated with PACS dataset.
    Error bars indicate $95\%$ confidence intervals (10 runs).}
    
\end{figure*}

\subsubsection{Results and discussion}
We first provide the training reward on PACS in \Cref{fig:rl-reward}.
We can observe that the proposed algorithm gradually reaches a convergence point at approximately $40$ episodes, which exhibits a faster convergence speed.
If the training size is large (\eg, $150$), the reward first increases and then decreases until it flattens out. Additionally, the variance of average reward changes during the increase in training size. It can be concluded that by designing diversity as the reward function, the neural network can learn more diverse examples
Moreover, in the same tasks presented in \Cref{sec:obser_real}, no noticeable trends were indicating a decrease in performance. However, a significant reduction in error can be observed in \Cref{fig:rl} following the selection process.

Weighted objectives were once believed to be ineffective for over-parameterized models, including DNN~\citep{byrd2019effect} due to zero-valued optimization results (\ie, the model fits the training dataset perfectly, resulting in zero loss). However, our experiments demonstrate the opposite: our method outperforms the vanilla in the case of large training samples, whether random or elaborated weighted. This is perhaps because cross-entropy loss is hard to reach 0 due to multi-distribution distance. Simultaneously, the selection of samples exhibiting the largest dissimilarities between distributions has the potential to broaden the coverage of the convex hull in the training domains. This expansion ultimately leads to improved performance outcomes.

By utilizing weighted objectives, we effectively prioritize and emphasize samples that significantly contribute to the overall learning process. This approach allows the model to focus on the most informative and challenging parts of the training mixture, thereby enhancing its capacity to generalize to new, unseen shifts. Furthermore, our selection method operates independently of the necessity for environment labels (\ie, identification of sample domains), making it more flexible and applicable in a broader range of scenarios. It can be seamlessly integrated with other OOD generalization techniques, such as domain generalization or data augmentation, to further enhance the model's ability to generalize to new environments.
\section{Related Work}
\label{sec:related_work}
\textbf{OOD generalization.}
Theoretical achievements in machine learning has been made all under the assumption of independent and identical distributions (\iid assumption).
However, in many real-world scenarios, it is difficult for the \iid assumption to be satisfied, especially in the areas such as medical care~\citep{schrouff2022diagnosing}. Consequently, the ability to generalize under distribution shift gains more importance.
Early OOD studies mainly follow the distribution alignment by learning domain invariant representations via kernel methods~\citep{hu2020domain}, or invariant risk minimization~\citep{chen2022pareto}, or disentangle learning~\citep{zhang2022towards}.
Research on generalizing to OOD data has been extensively investigated in previous literature, mainly focusing on data augmentation~\citep{yao2022Improving} or style transfer~\citep{zhang2022Exact}.
Increasing data quantity and diversity from various domains enhances the model's capability to handle unseen or novel data.
\citet{zhang2017mixup} introduced Mixup, which generates new training examples by linearly interpolating between pairs of original samples. \citet{zhou2020Domain} further extended this idea by Mixstyle, a method that leverages domain knowledge to generate augmented samples. Other OOD methods on the level also employ cross-gradient training~\citep{shankar2018Generalizing} and Fourier transform~\citep{xu2021FourierBased}.

Different from all the methods mentioned above, our work starts with the definition of OOD data and its empirical phenomenon and revisits the generalization problem from a theoretical perspective.
It is worth noting that the difference between us and \citet{de2023value} is that the latter focuses on the effects of adding a few OOD data to the training data, while we focus on the effects of generalizing to unseen target tasks.

\textbf{Data selection.}
Data selection is a critical component of the neural network learning process, with various important pieces of work~\citep{sorscher2022beyond}.
Careful selection of relevant and representative data is to guarantee that the data used for training accurately captures the patterns and relationships that the network is supposed to learn.
Several recent studies have explored different metrics for quantifying individual differences between data points, such as EL2N scores~\citep{yoon2020data} and forgetting scores~\citep{toneva2018empirical}.
There are also influential works on data selection that contribute to OOD and large pre-trained models.
\citet{zhu2023xtab} introduced a framework of cross-table pre-training of tabular transformers on datasets from various domains.
\citet{shao2023synthetic} leveraged manually created examples to guide large language models in generating more effective instances automatically and select effective ones to promote better inference.

While these methods can improve the training data quality, our method leverages existing data with reinforcement learning in source domains for modeling to maximize diversity.
Discussions concerning the convex hull and OOD data selection have also been prevalent in literature~\citep{zhou2022model, ye2022ood}. For instance, \citet{krueger2021out} aim to optimize the worst-case performance of the convex hull of training mixture. Our work distinguishes itself by focusing on error scenarios resulting from distribution shifts on DNN and highlighting the importance of data diversity. Additionally, we provide novel insights for algorithm design. 
\section{Conclusion}
This work examined the phenomenon of non-decreasing generalization error when the models are trained on data mixture of source environments and the evaluation is conducted on unseen target samples. Through empirical analysis on benchmark datasets with DNN, we introduced a novel theorem framework within the context of OOD generalization to explain the non-decreasing trends. Furthermore, we demonstrated the effectiveness of the proposed theoretical framework in the interpretation of the existing methods by evaluating existing techniques such as data augmentation and pre-training. We also employ a novel data selection algorithm only that is sufficient to deliver superior performance over the baseline methods.

% use section* for acknowledgment
\section*{Acknowledgment}
This work is supported by .
% Can use something like this to put references on a page
% by themselves when using endfloat and the captionsoff option.
\ifCLASSOPTIONcaptionsoff
  \newpage
\fi

% trigger a \newpage just before the given reference
% number - used to balance the columns on the last page
% adjust value as needed - may need to be readjusted if
% the document is modified later
%\IEEEtriggeratref{8}
% The "triggered" command can be changed if desired:
%\IEEEtriggercmd{\enlargethispage{-5in}}

% references section

\bibliographystyle{IEEEtranN}
\bibliography{main}

% biography section
% 
% If you have an EPS/PDF photo (graphicx package needed) extra braces are
% needed around the contents of the optional argument to biography to prevent
% the LaTeX parser from getting confused when it sees the complicated
% \includegraphics command within an optional argument. (You could create
% your own custom macro containing the \includegraphics command to make things
% simpler here.)

% \begin{IEEEbiography}[{\includegraphics[width=1in,height=1.25in,clip,keepaspectratio]{mshell}}]{Songming Zhang}
% is currently pursuing the M.S. degree in computer science and technology with Chongqing Jiaotong University, Chongqing. His current research interests include machine learning and bioinformatics.
% \end{IEEEbiography}

% \begin{IEEEbiography}[{\includegraphics[width=1in,height=1.25in,clip,keepaspectratio]{author-bio/Ziyu.jpg}}]{Ziyu Lyu}
% is currently an associate professor at Shenzhen Institute of Advanced Technology, Chinese Academy of Sciences. She received her Ph.D. degree in Computer Science from the University of Hong Kong in 2016. Her research interests include recommender system, natural language processing, and machine learning.
% \end{IEEEbiography}

\end{document}